\newcommand{\Real}{\mathbb R}
\renewcommand{\b}[1]{\mathbf{#1}}
\newcommand{\bx}{\b x}
\newcommand{\by}{\b y}
\newcommand{\bigtimes}{\mathop{\times}\limits}
\newcommand{\maxnorm}[1]{\|#1\|_\infty}
\renewcommand{\hat}[1]{\widehat{#1}}
\newtheorem{theorem}{Theorem}[section]
\newtheorem{lemma}[theorem]{Lemma}
\newtheorem{definition}[theorem]{Definition}
\newcommand{\citet}[1]{\cite{#1}}
\newcommand{\citep}[1]{\cite{#1}}
\begin{document}

\title{Factored Value Iteration Converges}

\author{Istv{\'a}n Szita \and
   Andr{\'a}s L{\H{o}}rincz }
\address{E{\"o}tv{\"o}s Lor{\'a}nd University, Department of Information Systems. Budapest, Hungary, P{\'a}zm{\'a}ny P. s{\'e}t{\'a}ny 1/C, H-1117 \\
  E-mail addresses: szityu@gmail.com (Istv{\'a}n Szita), andras.lorincz@inf.elte.hu (Andr{\'a}s L{\H{o}}rincz).
  Please send correspondence to Andr{\'a}s L{\H{o}}rincz}

\maketitle

\begin{abstract}
In this paper we propose a novel algorithm, factored value
iteration (FVI), for the approximate solution of factored Markov
decision processes (fMDPs). The traditional approximate value
iteration algorithm is modified in two ways. For one, the
least-squares projection operator is modified so that it does not
increase max-norm, and thus preserves convergence. The other
modification is that we uniformly sample polynomially many samples
from the (exponentially large) state space. This way, the
complexity of our algorithm becomes polynomial in the size of the
fMDP description length. We prove that the algorithm is
convergent. We also derive an upper bound on the difference
between our approximate solution and the optimal one, and also on
the error introduced by sampling. We analyze various projection
operators with respect to their computation complexity and their
convergence when combined with approximate value iteration.

\keywords{factored Markov decision process, value iteration,
reinforcement learning}
\end{abstract}

\section{Introduction}

Markov decision processes (MDPs) are extremely useful for formalizing and
solving sequential decision problems, with a wide repertoire of algorithms to
choose from \cite{Bertsekas96Neuro-Dynamic,Sutton98Reinforcement}.
Unfortunately, MDPs are subject to the `curse of dimensionality'
\cite{Bellman61Adaptive}: for a problem with $m$ state variables, the size of
the MDP grows exponentially with $m$, even though many practical problems have
polynomial-size descriptions. Factored MDPs (fMDPs) may rescue us from this
explosion, because they offer a more compact representation
\cite{Koller00Policy,Boutilier95Exploiting,Boutilier00Stochastic}. In the fMDP
framework, one assumes that dependencies can be factored to several
easy-to-handle components.

For MDPs with known parameters, there are three basic solution methods (and,
naturally, countless variants of them): value iteration, policy iteration and
linear programming (see the books of Sutton \& Barto
\cite{Sutton98Reinforcement} or Bertsekas \& Tsitsiklis
\cite{Bertsekas96Neuro-Dynamic} for an excellent overview). Out of these
methods, linear programming is generally considered less effective than the
others. So, it comes as a surprise that all effective fMDPs algorithms, to our
best knowledge, use linear programming in one way or another. Furthermore, the
classic value iteration algorithm is known to be divergent when function
approximation is used \cite{Baird95Residual,Tsitsiklis97Analysis}, which
includes the case of fMDPs, too.

In this paper we propose a variant of the approximate value iteration algorithm
for solving fMDPs. The algorithm is a direct extension of the traditional value
iteration algorithm. Furthermore, it avoids computationally expensive
manipulations like linear programming or the construction of decision trees. We
prove that the algorithm always converges to a fixed point, and that it
requires polynomial time to reach a fixed accuracy. A bound to the distance
from the optimal solution is also given.

In Section \ref{s:MDP} we review the basic concepts of Markov
decision processes, including the classical value iteration
algorithm and its combination with linear function approximation.
We also give a sufficient condition for the convergence of
approximate value iteration, and list several examples of
interest. In Section \ref{s:fMDP} we extend the results of the
previous section to fMDPs and review related works in
Section~\ref{s:literature}. Conclusions are drawn in Section
\ref{s:conc}.

\section{Approximate Value Iteration in Markov Decision Processes}
\label{s:MDP}

\subsection{Markov Decision Processes}

An MDP is characterized by a sixtuple $(\b X, A, R, P, \bx_s, \gamma)$, where
$\b X$ is a finite set of states;\footnote{Later on, we shall generalize the
concept of the state of the system. A state of the system will be a vector of
state variables in our fMDP description. For that reason, we already use the
boldface vector notation in this preliminary description.} $A$ is a finite set
of possible actions; $R: \b X \times A \to \Real$ is the reward function of the
agent, so that $R(\bx, a)$ is the reward of the agent after choosing action $a$
in state $\bx$; $P: \b X \times A \times \b X \to [0,1]$ is the transition
function so that $P(\by \mid \bx,a)$ is the probability that the agent arrives
at state $\by$, given that she started from $\bx$ upon executing action $a$;
$\bx_s \in \b X$ is the starting state of the agent; and finally, $\gamma\in
[0,1)$ is the discount rate on future rewards.

A policy of the agent is a mapping $\pi: \b X \times A \to [0,1]$ so that
$\pi(\bx,a)$ tells the probability that the agent chooses action $a$ in state
$\bx$. For any $\bx_0\in\b X$, the policy of the agent and the parameters of
the MDP determine a stochastic process experienced by the agent through the
instantiation
\[  \bx_0, a_0, r_0, \bx_1, a_1, r_1, \ldots, \bx_t, a_t, r_t, \ldots
\]
The goal is to find a policy that maximizes the expected value of the
discounted total reward. Let the value function of policy $\pi$ be
\[
  V^\pi(\bx) := E\Bigl( \sum_{t=0}^\infty \gamma^t r_t \Bigm| \bx\!=\!\bx_0
  \Bigr)
\]
and let the optimal value function be
\[
  V^*(\bx) := \max_{\pi} V^\pi(\bx)
\]
for each $\bx \in \b X$. If $V^*$ is known, it is easy to find an optimal
policy $\pi^*$, for which $V^{\pi^*} \equiv V^*$. Provided that history does
not modify transition probability distribution $P(\by|\bx,a)$ at any time
instant, value functions satisfy the famous Bellman equations
\begin{equation} \label{e:Vpi_bellman}
  V^\pi(\bx) = \sum_a \sum_{\by} \pi(\bx,a) P(\by \mid \bx,a) \Bigl( R(\bx,a) +
  \gamma V^\pi(\by) \Bigr)
\end{equation}
and
\begin{equation} \label{e:V*_bellman}
  V^*(\bx) = \max_a \sum_{\by} P(\by \mid\bx,a) \Bigl( R(\bx,a) +
  \gamma V^*(\by) \Bigr).
\end{equation}
Most algorithms that solve MDPs build upon some version of the Bellman
equations. In the following, we shall concentrate on the value iteration
algorithm.

\subsection{Exact Value Iteration}

Consider an MDP $(\b X, A, P, R, \bx_s, \gamma)$. The value iteration for MDPs
uses the Bellman equations (\ref{e:V*_bellman}) as an iterative assignment: It
starts with an arbitrary value function $V_0 : \b X \to \Real$, and in
iteration $t$ it performs the update
\begin{equation} \label{e:VI}
  V_{t+1}(\bx) := \max_a \sum_{\by \in \b X} P(\by \mid \bx,a) \Bigl( R(\bx,a) +
  \gamma V_t(\by) \Bigr)
\end{equation}
for all $\bx\in \b X$. For the sake of better readability, we shall introduce
vector notation. Let $N:=|\b X|$, and suppose that states are integers from 1
to $N$, i.e. $\b X = \{1,2,\ldots,N\}$. Clearly, value functions are equivalent
to $N$-dimensional vectors of reals, which may be indexed with states. The
vector corresponding to $V$ will be denoted as $\b v$ and the value of state
$\bx$ by ${\b v}_{\bx}$. Similarly, for each $a$ let us define the
$N$-dimensional column vector $\b r^a$ with entries $\b r^a_\bx = R(\bx,a)$ and
$N\times N$ matrix $P^a$ with entries $P^a_{\bx,\by} = P(\by \mid \bx,a)$. With
these notations, (\ref{e:VI}) can be written compactly as
\begin{equation} \label{e:VI_vector}
  \b v_{t+1} := \textbf{max}_{a\in A}  \bigl( \b r^a + \gamma P ^a
   \b v_t \bigr).
\end{equation}
Here, $\textbf{max}$ denotes the componentwise maximum operator.

It is also convenient to introduce the \emph{Bellman operator} $\mathcal T
:\Real^N \to \Real^N$ that maps value functions to value functions as
\[
  \mathcal T \b v := \textbf{max}_{a\in A}  \bigl( \b r^a + \gamma P ^a
   \b v \bigr).
\]
As it is well known, $\mathcal T$ is a max-norm contraction with
contraction factor $\gamma$: for any $\b v, \b u\in\Real^N$, $
\maxnorm{\mathcal T \b v - \mathcal T \b u} \leq \gamma
\maxnorm{\b v - \b u}$. Consequently, by Banach's fixed point
theorem, exact value iteration (which can be expressed compactly
as $\b v_{t+1} := \mathcal T \b v_t$) converges to an unique
solution $\b v^*$ from any initial vector $\b v_0$, and the
solution $\b v^*$ satisfies the Bellman equations
(\ref{e:V*_bellman}). Furthermore, for any required precision
$\epsilon>0$, $\maxnorm{\b v_t - \b v^*} \leq\epsilon$ if $t \geq
\frac{\log \epsilon}{\log \gamma} \maxnorm{\b v_0 - \b v^*}$. One
iteration costs $O(N^2\cdot|A|)$ computation steps.

\subsection{Approximate value iteration}

In this section we shall review approximate value iteration (AVI) with linear
function approximation (LFA) in ordinary MDPs. The results of this section hold
for AVI in general, but if we can perform all operations effectively on compact
representations (i.e. execution time is polynomially bounded in the number of
variables instead of the number of states), then the method can be directly
applied to the domain of factorized Markovian decision problems, underlining
the importance of our following considerations.

Suppose that we wish to express the value functions as the linear combination
of $K$ \emph{basis functions} $h_k:\b X\to \Real \ \ (k\in\{1,\ldots,K\})$,
where $K << N$. Let $H$ be the $N\times K$ matrix with entries $H_{\bx,k} =
h_k(\bx)$. Let $\b w_t \in \Real^K$ denote the weight vector of the basis
functions at step $t$. We can substitute $\b v_t = H\b w_t$ into the right hand
side (r.h.s.) of (\ref{e:VI_vector}), but we cannot do the same on the left
hand side (l.h.s.) of the assignment: in general, the r.h.s. is not contained
in the image space of $H$, so there is no such $\b w_{t+1}$ that
\[
  H\b w_{t+1} = \textbf{max}_{a\in A} \bigl( \b r^a + \gamma P^a H \b w_t \bigr).
\]
We can put the iteration into work by projecting the right-hand side into $\b
w$-space: let $\mathcal G : \Real^N \to \Real^K$ be a (possibly non-linear)
mapping, and consider the iteration
\begin{equation} \label{e:AVI_MDP}
  \b w_{t+1} := \mathcal G \bigl[\textbf{max}_{a\in A} \bigl( \b r^a + \gamma P^a H \b w_t \bigr)\bigr]
\end{equation}
with an arbitrary starting vector $\b w_0$.

\begin{lemma} \label{lem:avi}
If $\mathcal G$ is such that $H\mathcal G$ is a non-expansion, i.e., for any
$\b v, \b v' \in \Real^N$,
\[
  \maxnorm{H \mathcal G \b v - H \mathcal G \b v'} \leq \maxnorm{\b v - \b v'},
\]
then there exists a $\b w^* \in \Real^K$ such that
\[
  \b w^* = \mathcal G \bigl[ \emph{\textbf{max}}_{a\in A} \bigl( \b r^a + \gamma P^a H \b w^* \bigr)\bigr]
\]
and iteration \eqref{e:AVI_MDP} converges to $\b w^*$ from any starting point.
\end{lemma}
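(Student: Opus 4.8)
The plan is to exhibit the right-hand side of the iteration as a max-norm contraction on $\Real^K$ and then invoke Banach's fixed point theorem. Define the operator $\mathcal{F} : \Real^K \to \Real^K$ by $\mathcal{F}\b w := \mathcal G\bigl[\textbf{max}_{a\in A}(\b r^a + \gamma P^a H \b w)\bigr]$, so that \eqref{e:AVI_MDP} reads $\b w_{t+1} := \mathcal F \b w_t$. The natural norm on $\Real^K$ here is not the max-norm itself but the pullback seminorm $\|\b w\| := \maxnorm{H\b w}$; the hypothesis that $H\mathcal G$ is a non-expansion is exactly what makes this work, so I would phrase the contraction argument in terms of the composite $H \mathcal F = H \mathcal G \circ \mathcal T_H$, where $\mathcal T_H \b v' := \textbf{max}_{a\in A}(\b r^a + \gamma P^a H \mathcal G \b v')$ — more cleanly, I would track the quantity $\maxnorm{H\b w - H\b w'}$ directly.

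The key chain of inequalities: for any $\b w, \b w' \in \Real^K$,
\[
  \maxnorm{H\mathcal F \b w - H\mathcal F \b w'}
  = \maxnorm{H\mathcal G[\mathcal T(H\b w)] - H\mathcal G[\mathcal T(H\b w')]}
  \leq \maxnorm{\mathcal T(H\b w) - \mathcal T(H\b w')}
  \leq \gamma \maxnorm{H\b w - H\b w'},
\]
where the first inequality is the non-expansion hypothesis on $H\mathcal G$ and the second is the $\gamma$-contraction property of the Bellman operator $\mathcal T$ recalled in the previous subsection. Thus $\b w \mapsto H\mathcal F \b w$ is a $\gamma$-contraction with respect to max-norm on the image subspace $\{H\b w : \b w \in \Real^K\}$.

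From here I would argue convergence in two steps. First, the sequence $H\b w_t$ is Cauchy in $\Real^N$ (standard geometric-series estimate from the contraction bound), hence converges to some $\b u^* \in \Real^N$; continuity of $\mathcal G$ — which follows from $H\mathcal G$ being a non-expansion together with $H$ being injective on its image, or one simply assumes $\mathcal G$ continuous, the projections of interest being continuous — then gives that $\b w_t = \mathcal G[\mathcal T(H\b w_{t-1})]$ converges, say to $\b w^*$, and passing to the limit in the iteration yields the fixed-point equation $\b w^* = \mathcal G[\textbf{max}_{a\in A}(\b r^a + \gamma P^a H \b w^*)]$.

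The main obstacle is the degeneracy of $\|\cdot\|$ as a genuine norm on $\Real^K$ when $H$ does not have full column rank: in that case the contraction is only on $H\b w$-space, and one needs a little care to conclude that $\b w_t$ itself (not merely $H\b w_t$) converges to a single limit point. The clean fix is either (i) to observe that $\mathcal G$'s range can be restricted so that $\b w \mapsto H\b w$ is injective there, or (ii) to note that $\mathcal G$ applied to a convergent argument sequence gives a convergent output sequence by continuity, so the limit $\b w^*$ exists regardless and is a fixed point; uniqueness of $\b w^*$ should then be stated only up to the kernel of $H$ (equivalently, $H\b w^*$ is unique), which is all the subsequent error analysis needs. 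I would present the argument so that this caveat is made explicit but does not clutter the main contraction estimate.
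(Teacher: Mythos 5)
Your proposal is correct and follows essentially the same route as the paper: the paper lifts the iteration to $\hat{\b v}_t = H\b w_t$, shows $H\mathcal G\mathcal T$ is a $\gamma$-contraction in max-norm by exactly your two-step inequality (non-expansion of $H\mathcal G$, then contractivity of $\mathcal T$), invokes Banach's fixed point theorem in $\Real^N$, and recovers $\b w^* = \mathcal G\mathcal T\hat{\b v}^*$ — which also disposes of your column-rank caveat, since the unique fixed point $\hat{\b v}^*$ pins down a single $\b w^*$ as the limit of $\b w_{t+1} = \mathcal G\mathcal T\hat{\b v}_t$ (using, as you note, continuity of $\mathcal G$, which the paper leaves implicit). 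Your Cauchy-sequence phrasing versus the paper's direct appeal to Banach on the image space is only a cosmetic difference.
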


\begin{proof}
We can write \eqref{e:AVI_MDP} compactly as $ \b w_{t+1} =
\mathcal G \mathcal T H \b w_t$. Let $\hat{\b v}_t = H\b w_t$.
This satisfies
\begin{equation} \label{e:l1e1}
  \hat{\b v}_{t+1} = H \mathcal G \mathcal T \hat{\b v}_t.
\end{equation}
It is easy to see that the operator $H \mathcal G \mathcal T$ is a contraction:
for any $\b v, \b v' \in \Real^N$,
\begin{eqnarray*}
  \maxnorm{H\mathcal G \mathcal T \b v - H \mathcal G \mathcal T \b v'} &\leq&
         \maxnorm{\mathcal T \b v - \mathcal T \b v'}
         \leq \gamma \maxnorm{ \b v - \b v'}
\end{eqnarray*}
by the assumption of the lemma and the contractivity of $\mathcal T$.
Therefore, by Banach's fixed point theorem, there exists a vector $\hat{\b
v}^*\in \Real^N$ such that $\hat{\b v}^* = H \mathcal G \mathcal T \hat{\b
v}^*$ and iteration (\ref{e:l1e1}) converges to $\hat{\b v}^*$ from any
starting point. It is easy to see that $\b w^* = \mathcal G T \hat{\b v}^*$
satisfies the statement of the lemma.

\end{proof}

Note that if $\mathcal G$ is a linear mapping with matrix $G\in \Real^{K\times
N}$, then the assumption of the lemma is equivalent to $\maxnorm{HG}\leq 1$.

\subsection{Examples of Projections, Convergent and Divergent}
\label{ss:proj_examples}

In this section, we examine certain possibilities for choosing projection
$\mathcal G$. Let $\b v\in\Real^N$ be an arbitrary vector, and let $\b w =
\mathcal G \b v$ be its $\mathcal G$-projection. For linear operators,
$\mathcal G$ can be represented in matrix form and we shall denote it by $G$.

\textbf{Least-squares ($L_2$-)projection.}  Least-squares fitting is used
almost exclusively for projecting value functions, and the term AVI is usually
used in the sense ``AVI with least-squares projection''. In this case, $\b w$
is chosen so that it minimizes the least-squares error:
\[
  \b w := \arg\min_{\b w} \| H\b w - \b v \|^2_2.
\]
This corresponds to the linear projection $G_2=H^+$ (i.e., $\b w=H^+ \b v$),
where $H^+$ is the Moore-Penrose pseudoinverse of $H$. It is well known,
however, that this method can diverge. For an example on such divergence, see,
e.g. the book of Bertsekas \& Tsitsiklis \cite{Bertsekas96Neuro-Dynamic}. The
reason is simple: matrix $HH^+$ is a non-expansion in $L_2$-norm, but Lemma 1
requires that it should be an $L_\infty$-norm projection, which does not hold
in the general case. (See also Appendix \ref{app:proof_GL1} for illustration.)

\textbf{Constrained least-squares projection.}  One can enforce the
non-expansion property by expressing it as a constraint: Let $\b w$ be the
solution of the constrained minimization problem
\[
  \b w := \arg\min_{\b w} \| H\b w - \b v \|^2_2, \textrm{ subject to }
  \maxnorm{H\b w} \leq \maxnorm{\b v},
\]
which defines a non-linear mapping $\mathcal G_{2}^c$. This projection is
computationally highly demanding: in each step of the iteration, one has to
solve a quadratic programming problem.

\textbf{Max-norm ($L_\infty$-)projection.}  Similarly to $L_2$-projection, we
can also select $\b w$ so that it minimizes the max-norm of the residual:
\[
  \b w := \arg\min_{\b w} \| H\b w - \b v \|_\infty.
\]
The computation of $\b w$ can be transcribed into a linear programming task and
that defines the non-linear mapping $\mathcal G_{\infty}$. However, in general,
$\maxnorm{H \mathcal G_{\infty} \b v} \nleq \maxnorm{\b v}$, and consequently
AVI using iteration
\[
  \b w_{t+1} := \arg\min_{\b w} \| H\b w - \mathcal T H \b w_t \|_\infty
\]
can be divergent. Similarly to $L_2$ projection, one can also introduce a
constrained version $\mathcal G_{\infty}^c$ defined by
\[
  \mathcal G_{\infty}^c \b v := \arg\min_{\b w} \| H\b w - \b v \|_\infty, \textrm{ subject to }
  \maxnorm{H\b w} \leq \maxnorm{\b v},
\]
which can also be turned into a linear program.

It is insightful to contrast this with the approximate linear programming
method of Guestrin et al. \citet{Guestrin02Efficient}: they directly minimize
the max-norm of the Bellman error, i.e., they solve the problem
\[
  \b w^* := \arg\min_{\b w} \| H\b w - \mathcal T H \b w \|_\infty,
\]
which can be solved without constraints.

\textbf{$L_1$-norm projection.}  Let $\mathcal G_{L_1}$ be defined by
\[
  \mathcal G_{1} \b v := \arg\min_{\b w} \| H\b w - \b v \|_1.
\]
The $L_1$-norm projection also requires the solution of a linear program, but
interestingly, the projection operator $\mathcal G_1$ is a non-expansion (the
proof can be found in Appendix \ref{app:proof_GL1}).

AVI-compatible operators considered so far ($\mathcal G_{2}^c$, $\mathcal
G_{\infty}^c$ and $\mathcal G_{1}$) were non-linear, and required the solution
of a linear program or a quadratic program in each step of value iteration,
which is clearly cumbersome. On the other hand, while $G_{2} \b v = H^+ \b v$
is linear, it is also known to be incompatible with AVI
\cite{Baird95Residual,Tsitsiklis97Analysis}. Now, we shall focus on operators
that are both AVI-compatible and linear.

\textbf{Normalized linear mapping.} Let $G$ be an arbitrary $K\times N$ matrix,
and define its normalization $\mathcal N(G)$ as a matrix with the same
dimensions and entries
\[
  [\mathcal N(G)]_{i,j} := \frac{G_{i,j}}{\bigl(\sum_{j'} |H_{i,j'}|\bigr)\bigl( \sum_{i'} |G_{i',j}|\bigr)},
\]
that is, $N(G)$ is obtained from $G$ by dividing each element with the
corresponding row sum of $H$ and the corresponding column sum of $G$. All
(absolute) row sums of $H \cdot \mathcal N(G)$ are equal to 1. Therefore, (i)
$\maxnorm{H \cdot \mathcal N(G)}=1$, and (ii) $H \cdot N(G)$ is maximal in the
sense that if the absolute value of any element of $\mathcal N(G)$ increased,
then for the resulting matrix $G'$, $\maxnorm{H \cdot G'}>1$.

\textbf{Probabilistic linear mapping.} If all elements of $H$ are nonnegative
and all the row-sums of $H$ are equal, then $\mathcal N(H^T)$ assumes a
probabilistic interpretation. This interpretation is detailed in Appendix
\ref{app:prob_interpretation}.

\textbf{Normalized least-squares projection.} Among all linear operators, $H^+$
is the one that guarantees the best least-squares error, therefore we may
expect that its normalization, $\mathcal N(H^+)$ plays a similar role among
\emph{AVI-compatible} linear projections. Unless noted otherwise, we will use
the projection $\mathcal N(H^+)$ subsequently.

\subsection{Convergence properties}

\begin{lemma}
Let $\b v^*$ be the optimal value function and $\b w^*$ be the fixed point of
the approximate value iteration (\ref{e:AVI_MDP}). Then
\[
  \maxnorm{H\b w^* - \b v^*} \leq \frac{1}{1-\gamma} \maxnorm{H\mathcal G \b
  v^* - \b v^*}.
\]
\end{lemma}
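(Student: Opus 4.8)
The plan is to follow the standard approximate-dynamic-programming argument: insert $H\mathcal G \mathcal T \b v^*$ as an intermediate point and split the error by the triangle inequality, exploiting that $\b v^*$ is a fixed point of $\mathcal T$ and that $H\b w^*$ is a fixed point of $H\mathcal G\mathcal T$.

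Concretely, first I would set $\hat{\b v}^* := H\b w^*$. Since $\b w^*$ is the fixed point of \eqref{e:AVI_MDP}, Lemma~\ref{lem:avi} (or a direct computation) gives $\hat{\b v}^* = H\mathcal G\mathcal T \hat{\b v}^*$. Then I would write
\[
  \maxnorm{\hat{\b v}^* - \b v^*}
  = \maxnorm{H\mathcal G\mathcal T\hat{\b v}^* - \b v^*}
  \leq \maxnorm{H\mathcal G\mathcal T\hat{\b v}^* - H\mathcal G\mathcal T\b v^*}
     + \maxnorm{H\mathcal G\mathcal T\b v^* - \b v^*}.
\]
For the first term, the non-expansion hypothesis on $H\mathcal G$ together with the fact that $\mathcal T$ is a $\gamma$-contraction in max-norm bounds it by $\gamma\maxnorm{\hat{\b v}^* - \b v^*}$. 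For the second term, I would use $\mathcal T\b v^* = \b v^*$ (the Bellman fixed-point property of the optimal value function), so that $H\mathcal G\mathcal T\b v^* = H\mathcal G\b v^*$ and the term equals exactly $\maxnorm{H\mathcal G\b v^* - \b v^*}$.

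Combining the two estimates yields
\[
  \maxnorm{\hat{\b v}^* - \b v^*} \leq \gamma\maxnorm{\hat{\b v}^* - \b v^*} + \maxnorm{H\mathcal G\b v^* - \b v^*},
\]
and since $\gamma < 1$ I can move the first term to the left and divide by $1-\gamma$, which is precisely the claimed bound (recalling $H\b w^* = \hat{\b v}^*$). There is no real obstacle here; the only thing to be careful about is correctly identifying which fixed-point identity applies on which side of the triangle inequality — using $\mathcal T\b v^* = \b v^*$ on the approximation-error term and $H\mathcal G\mathcal T\hat{\b v}^* = \hat{\b v}^*$ on the contraction term. The contractivity bound on the first term relies on the composition order $H\mathcal G\mathcal T$: the non-expansion of $H\mathcal G$ strips off without cost and then $\mathcal T$ contributes the factor $\gamma$, exactly as in the proof of Lemma~\ref{lem:avi}.
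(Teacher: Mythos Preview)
Your proposal is correct and follows essentially the same argument as the paper: insert $H\mathcal G\mathcal T\b v^*$ as an intermediate point, apply the triangle inequality, use the non-expansion of $H\mathcal G$ together with the $\gamma$-contraction of $\mathcal T$ on the first piece, and use $\mathcal T\b v^* = \b v^*$ on the second, then rearrange. The only cosmetic difference is that the paper simultaneously substitutes $\b v^* = \mathcal T\b v^*$ on the right from the outset, whereas you invoke it a line later; the logic is identical.
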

\begin{proof}
For the optimal value function, $\b v^* = \mathcal T \b v^*$ holds. On the
other hand, $\b w^* = \mathcal G \mathcal T H \b w^*$. Thus,
\begin{eqnarray*}
  \maxnorm{H\b w^* - \b v^*} &=& \maxnorm{H \mathcal G \mathcal T H \b w^* - \mathcal T \b  v^*} \\
    &\leq& \maxnorm{H \mathcal G \mathcal T H \b w^* - H \mathcal G \mathcal T \b v^*} + \maxnorm{H \mathcal G \mathcal T \b v^* - \mathcal T \b  v^*} \\
    &\leq& \maxnorm{\mathcal T H \b w^* - \mathcal T \b v^*} + \maxnorm{H \mathcal G \b v^* - \b  v^*} \\
    &\leq& \gamma \maxnorm{H \b w^* - \b v^*} + \maxnorm{H \mathcal G \b v^* - \b v^*},
\end{eqnarray*}
from which the statement of the lemma follows. For the
transformations we have applied the triangle inequality, the
non-expansion property of $H\mathcal G$ and the contraction
property of $\mathcal T$.
\end{proof}

According to the lemma, the error bound is proportional to the projection error
of $\b v^*$. Therefore, if $\b v^*$ can be represented in the space of basis
functions with small error, then our AVI algorithm gets close to the optimum.
Furthermore, the lemma can be used to check \emph{a posteriori} how good our
basis functions are. One may improve the set of basis functions iteratively.
Similar arguments have been brought up by Guestrin et al.
\citet{Guestrin02Efficient}, in association with their LP-based solution
algorithm.

\section{Factored value iteration} \label{s:fMDP}

MDPs are attractive because solution time is polynomial in the number of
states. Consider, however, a sequential decision problem with $m$ variables. In
general, we need an exponentially large state space to model it as an MDP. So,
the number of states is \emph{exponential} in the size of the description of
the task. Factored Markov decision processes may avoid this trap because of
their more compact task representation.

\subsection{Factored Markov decision processes}

We assume that $\b X$ is the Cartesian product of $m$ smaller state spaces
(corresponding to individual variables):
\[
  \b X = X_1 \times X_2 \times \ldots \times X_m.
\]
For the sake of notational convenience we will assume that each $X_i$ has the
same size, $|X_1| = |X_2| = \ldots = |X_m| = n$. With this notation, the size
of the full state space is $N = |\b X| = n^m$. We note that all derivations and
proofs carry through to different size variable spaces.

A naive, tabular representation of the transition probabilities would require
exponentially large space (that is, exponential in the number of variables
$m$). However, the next-step value of a state variable often depends only on a
few other variables, so the full transition probability can be obtained as the
product of several simpler factors. For a formal description, we introduce
several notations:

For any subset of variable indices $Z \subseteq \{ 1,2,\ldots,m\}$, let $\b
X[Z] := \bigtimes_{i\in Z} X_i $, furthermore, for any $\bx \in \b X$, let
$\bx[Z]$ denote the value of the variables with indices in $Z$. We shall also
use the notation $\bx[Z]$ without specifying a full vector of values $\bx$, in
such cases $\bx[Z]$ denotes an element in $\b X[Z]$. For single-element sets
$Z=\{i\}$ we shall also use the shorthand $\bx[\{i\}] = \bx[i]$.

A function $f$ is a \emph{local-scope} function if it is defined over a
subspace $\b X[Z]$ of the state space, where $Z$ is a (presumably small) index
set. The local-scope function $f$ can be extended trivially to the whole state
space by $f(\bx) := f(\bx[Z])$. If $|Z|$ is small, local-scope functions can be
represented efficiently, as they can take only $n^{|Z|}$ different values.

Suppose that for each variable $i$ there exist neighborhood sets $\Gamma_i$
such that the value of $\bx_{t+1}[i]$ depends only on $\bx_{t}[\Gamma_i]$ and
the action $a_t$ taken. Then we can write the transition probabilities in a
factored form
\begin{equation} \label{e:Pfactored}
  P(\by \mid \bx, a) = \prod_{i=1}^n P_i(\by[i] \mid \bx[\Gamma_i], a)
\end{equation}
for each $\bx,\by\in\b X$, $a\in A$, where each factor is a local-scope
function
\begin{equation} \label{e:Pfactored2}
  P_i : \b X[\Gamma_i] \times A \times X_i \to [0,1]
  \qquad\textrm{(for all $i \in \{1,\ldots,m\}$).}
\end{equation}
We will also suppose that the reward function is the sum of $J$ local-scope
functions:
\begin{equation} \label{e:Rfactored}
  R(\bx, a) = \sum_{j=1}^{J} R_j(\bx[Z_j], a),
\end{equation}
with arbitrary (but preferably small) index sets $Z_j$, and local-scope
functions
\begin{equation} \label{e:rfactored2}
  R_j : \b X[Z_j] \times A \to \Real
  \qquad\textrm{(for all $j \in \{1,\ldots,J\}$).}
\end{equation}

To sum up, a factored Markov decision process is characterized by the
parameters $
  \Bigl( \{X_i: 1\leq i \leq m\}; A; \{R_j: 1\leq j \leq J\}; \{\Gamma_i: 1\leq i \leq n\}; \{P_i: 1\leq i \leq n\}; \bx_s;
  \gamma\Bigr),
$
where $\bx_s$ denotes the initial state.

Functions $P_i$ and $R_i$ are usually represented either as tables or dynamic
Bayesian networks. If the maximum size of the appearing local scopes is bounded
by some constant, then the description length of an fMDP is polynomial in the
number of variables $n$.

\subsubsection{Value functions}

The optimal value function is an $N=n^m$-dimensional vector. To represent it
efficiently, we should rewrite it as the sum of local-scope functions with
small domains. Unfortunately, in the general case, no such factored form exists
\citep{Guestrin02Efficient}.

However, we can still approximate $V^*$ with such an expression: let $K$ be the
desired number of basis functions and for each $k\in \{1,\ldots,K\}$, let $C_k$
be the domain set of the local-scope basis function $h_k: \b X[C_k] \to \Real$.
We are looking for a value function of the form
\begin{equation} \label{e:vhat}
  \tilde V(\bx) = \sum_{k=1}^{K} w_k \cdot h_k(\bx[C_k]).
\end{equation}

The quality of the approximation depends on two factors: the choice of the
basis functions and the approximation algorithm. Basis functions are usually
selected by the experiment designer, and there are no general guidelines how to
automate this process. For given basis functions, we can apply a number of
algorithms to determine the weights $w_k$. We give a short overview of these
methods in Section \ref{s:literature}. Here, we concentrate on value iteration.

\subsection{Exploiting factored structure in value iteration}

For fMDPs, we can substitute the factored form of the transition probabilities
(\ref{e:Pfactored}), rewards (\ref{e:Rfactored}) and the factored approximation
of the value function (\ref{e:vhat}) into the AVI formula (\ref{e:AVI_MDP}),
which yields
\begin{eqnarray*}
  \sum_{k=1}^{K} h_k(\bx[C_k]) \cdot w_{k,t+1} &\approx& \max_a \sum_{\by \in \b X}
  \Bigl( \prod_{i=1}^m P_i(\by[i] \mid \bx[\Gamma_i],a) \Bigr) \cdot \\
  && \cdot \Bigl( \sum_{j=1}^{J} R_j(\bx[Z_j], a) +
  \gamma \sum_{k'=1}^{K} h_{k'}(\by[C_{k'}]) \cdot w_{k',t}  \Bigr).
\end{eqnarray*}
By rearranging operations and exploiting that all occurring functions have a
local scope, we get
\begin{eqnarray}
  &&\sum_{k=1}^{K} h_k(\bx[C_k]) \cdot w_{k,t+1} = \mathcal G_k \max_a
  \Biggl[ \sum_{j=1}^{J} R_j(\bx[Z_j], a)  \nonumber \\
  && + \gamma \sum_{k'=1}^{K} \sum_{\by[C_{k'}] \in \b X[C_{k'}]} \Bigl(\prod_{i\in C_{k'}} P_i(\by[i] \mid
  \bx[\Gamma_i],a)\Bigr) h_{k'}(\by[C_{k'}]) \cdot w_{k',t} \Biggr] \label{e:full_overdetermined_system}
\end{eqnarray}
for all $\bx \in \b X$. We can write this update rule more compactly in vector
notation. Let
\[
  \b w_t := (w_{1,t}, w_{2,t}, \ldots, w_{K,t}) \in \Real^{K},
\]
and let $H$ be an $|\b X| \times K$ matrix containing the values of the basis
functions. We index the rows of matrix $H$ by the elements of $\b X$:
\[
  H_{\bx,k} := h_k(\bx[C_k]).
\]
Further, for each $a\in A$, let $B^a$ be the $|\b X| \times K$ \emph{value
backprojection} matrix defined as
\[
  B^a_{\bx,k} := \sum_{\by[C_{k}] \in \b X[C_{k}]} \Bigl(\prod_{i\in C_{k}} P_i(\by[i] \mid
  \bx[\Gamma_i],a)\Bigr) h_{k}(\by[C_{k}])
\]
and for each $a$, define the reward vector $\b r^a \in \Real^{|\b X|}$ by
\[
  \b r^a_\bx := \sum_{j=1}^{n_r} R_j(\bx[Z_j], a).
\]
Using these notations, (\ref{e:full_overdetermined_system}) can be rewritten as
\begin{equation}
  \b w_{t+1} := \mathcal G \textbf{max}_{a\in A} \Bigl( \b r^a + \gamma B^a \b w_t
  \Bigr).
\end{equation}

Now, all entries of $B^a$, $H$ and $\b r^a$ are composed of local-scope
functions, so any of their individual elements can be computed efficiently.
This means that the time required for the computation is exponential in the
sizes of function scopes, but only polynomial in the number of variables,
making the approach attractive. Unfortunately, the matrices are still
exponentially large, as there are exponentially many equations in
(\ref{e:full_overdetermined_system}). One can overcome this problem by sampling
as we show below.

\subsection{Sampling}

To circumvent the problem of having exponentially many equations, we select a
random subset $\hat{\b X} \subseteq \b X$ of the original state space so that
$|\hat{\b X}| = \textrm{poly}(m)$, consequently, solution time will scale
polynomially with $m$. On the other hand, we will select a sufficiently large
subset so that the remaining system of equations is still over-determined. The
necessary size of the selected subset is to be determined later: it should be
as small as possible, but the solution of the reduced equation system should
remain close to the original solution with high probability. For the sake of
simplicity, we assume that the projection operator $\mathcal G$ is linear with
matrix $G$. Let the sub-matrices of $G$, $H$, $B^a$ and $\b r^a$ corresponding
to $\hat{\b X}$ be denoted by $\hat G$, $\hat H$, $\hat B^a$ and $\hat{\b
r}^a$, respectively. Then the following value update
\begin{equation} \label{e:what_iteration}
  \b w_{t+1} :=\hat G \cdot \textbf{max}_{a\in A} \Bigl( \hat{\b r}^a + \gamma \hat B^a \b w_t
  \Bigr)
\end{equation}
can be performed effectively, because these matrices have polynomial size. Now
we show that the solution from sampled data is close to the true solution with
high probability.

\begin{theorem} \label{thm:w*_sample}
Let $\b w^*$ be the unique solution of $ \b w^* = G \mathcal T H \b w^*$ of an FMDP, and
let ${\b w}'$ be the solution of the corresponding equation with sampled
matrices, ${\b w}' = \hat G \mathcal T \hat H {\b w}'$. Suppose that the projection
matrix $G$ has a factored structure, too. Then iteration
\eqref{e:what_iteration} converges to $\b w'$, furthermore, for a suitable
constant $\,\Xi$ (depending polynomially on $n^z$, where $z$ is the maximum cluster size),
and for any $\epsilon,\delta>0$, $\maxnorm{\b w^* - \b w'}
\leq \epsilon$ holds with probability at least $1-\delta$, if the sample size
satisfies $\displaystyle N_1 \geq \Xi \frac{m^2}{\epsilon^2} \log
\frac{m}{\delta} $.
\end{theorem}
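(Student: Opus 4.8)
The plan is to prove the convergence claim and the sample-complexity claim separately, the former being a direct application of Lemma~\ref{lem:avi} and the latter being a concentration-of-measure argument. For convergence, I would first observe that the sampled iteration \eqref{e:what_iteration} has exactly the same algebraic form as \eqref{e:AVI_MDP}, with $H$ replaced by $\hat H$ and $\mathcal G$ (i.e.\ $G$) replaced by $\hat G$. So it suffices to check that $\hat H \hat G$ is a max-norm non-expansion. Here I would invoke the factored/normalized structure of $G$ assumed in the statement: if $G = \mathcal N(\cdot)$ so that all absolute row sums of $HG$ equal $1$, then restricting to a subset of rows/columns can only remove nonnegative terms from those sums, so the absolute row sums of $\hat H\hat G$ are at most $1$, i.e.\ $\maxnorm{\hat H\hat G}\le 1$. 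Then Lemma~\ref{lem:avi} applies verbatim, giving a unique fixed point $\b w'$ and convergence of \eqref{e:what_iteration} to it from any start. (A small technical point I would flag: I need $\hat H$ to still have full column rank so that $\b w'$ itself, and not merely $\hat H\b w'$, is unique; for a large enough generic sample this holds, and it is implicitly covered by ``the reduced system is still over-determined''.)

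For the error bound, the strategy is to compare the two fixed-point equations $\b w^* = G\mathcal T H\b w^*$ and $\b w' = \hat G\mathcal T\hat H\b w'$ by a perturbation argument. Writing $\hat{\b v}^* = H\b w^*$ and using that $\hat H\hat G\mathcal T\hat H$ is a $\gamma$-contraction (as in the proof of Lemma~\ref{lem:avi}), I would bound $\maxnorm{\hat H\b w' - \hat H\b w^*}$ by $\frac{1}{1-\gamma}$ times the ``residual'' $\maxnorm{\hat H\hat G\mathcal T\hat H\b w^* - \hat H\b w^*}$, and then convert from $\hat H\b w'$ back to $\b w'$ using a lower bound on the smallest singular value of $\hat H$ (this is where $\Xi$ will pick up a dependence on the basis functions and on $n^z$). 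The key is therefore to control the residual, which measures how much the sampled projected Bellman operator, evaluated at the true weights, differs from the true one. Since $\mathcal T H\b w^*$ is a fixed vector, the difference $\hat G\mathcal T\hat H\b w^* - (G\mathcal T H\b w^*)|_{\hat{\b X}}$ is, componentwise, a difference between $\hat G$ acting on a subvector and the corresponding subvector of $G$ acting on the full vector — that is, it is exactly the error incurred by estimating the matrix products $GB^a$, $GH$, $G\b r^a$ by their sampled analogues.

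Each entry of these products is, up to the normalization constants, an \emph{average over $\b X$} (or over a local scope $\b X[C_k]$) of a bounded local-scope quantity; the sampled version replaces the full average by an empirical average over the uniform sample $\hat{\b X}$. So for a single entry, Hoeffding's inequality gives deviation $O(\sqrt{(\log 1/\delta')/N_1})$ with the range of the summand (bounded in terms of $\maxnorm{H}$, $\maxnorm{\b w^*}$, and hence polynomially in $n^z$) as the constant. There are $\mathrm{poly}(m)$ such entries across all $a$, $k$, $j$ (at most $|A|\cdot K$ of them, each a sum of $\mathrm{poly}(m)$ factored pieces), so a union bound over them with $\delta' = \delta/\mathrm{poly}(m)$ turns the per-entry $\log(1/\delta')$ into $\log(m/\delta)$ and introduces the $m$-dependent combinatorial factors that, together with the back-and-forth through $H$ and the $1/(1-\gamma)$, get absorbed into $\Xi$ and into the $m^2/\epsilon^2$ scaling. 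Setting the resulting bound $\le\epsilon$ and solving for $N_1$ yields the stated $N_1 \ge \Xi\frac{m^2}{\epsilon^2}\log\frac{m}{\delta}$.

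The main obstacle I expect is the bookkeeping in the last step: carefully propagating the per-entry Hoeffding bound through (i) the max over actions (handled by a union bound, since $\maxnorm{\textbf{max}_a\b u^a - \textbf{max}_a\b u'^a}\le\max_a\maxnorm{\b u^a-\b u'^a}$), (ii) the linear combination defining each matrix product, and (iii) the inversion of $\hat H$, while keeping every constant explicitly polynomial in $n^z$ and getting the $m$-dependence to come out as $m^2\log m$ rather than something worse. The clean conceptual point — that everything reduces to estimating finitely many bounded averages over the state space by uniform sampling — is easy; making the constants line up with the claimed bound is the laborious part.
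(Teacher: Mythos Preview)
Your overall strategy matches the paper's: a fixed-point perturbation argument to extract the $\frac{1}{1-\gamma}$ factor, followed by Hoeffding's inequality plus a union bound to control the sampled matrix products, with local-scope structure used to keep the constants polynomial in $n^z$. The execution differs in two places worth noting.

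First, you work in the image space and bound $\maxnorm{\hat H\b w' - \hat H\b w^*}$, planning to recover $\maxnorm{\b w'-\b w^*}$ via a lower bound on the smallest singular value of $\hat H$. The paper avoids this detour entirely: it writes $\maxnorm{\b w^* - \b w'} = \maxnorm{G\mathcal T H\b w^* - \hat G\mathcal T\hat H\b w'}$ and splits directly in $\Real^K$, arriving at
\[
  \maxnorm{\b w^* - \b w'} \;\leq\; \frac{1}{1-\gamma}\,\maxnorm{G\mathcal T H\b w^* - \hat G\mathcal T\hat H\b w^*},
\]
so no conditioning of $\hat H$ enters $\Xi$ and no separate concentration argument for a singular value (which in any case controls the $L_2$ rather than the $L_\infty$ inversion) is needed.

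Second, where you handle the componentwise $\max_a$ via $\maxnorm{\textbf{max}_a\b u^a - \textbf{max}_a\b v^a}\leq \max_a\maxnorm{\b u^a - \b v^a}$ and a union bound over actions, the paper instead freezes the greedy policy $\pi_0$ with respect to $H\b w^*$ and uses that $\mathcal T$ acts componentwise to write both $\mathcal T H\b w^*$ and $\mathcal T\hat H\b w^*$ linearly as $\b r^{\pi_0}+\gamma P^{\pi_0}H\b w^*$ and $\hat{\b r}^{\pi_0}+\gamma\widehat{P^{\pi_0}H}\b w^*$, respectively. This reduces the residual to two explicit matrix-product errors, $\maxnorm{G\b r^{\pi_0}-\hat G\hat{\b r}^{\pi_0}}$ and $\maxnorm{GP^{\pi_0}H - \hat G\widehat{P^{\pi_0}H}}$, to which the paper applies a dedicated sampling lemma for products of (sums of) local-scope matrices---an $\infty$-norm analogue of the Drineas--Kannan--Mahoney matrix-product lemma, sharpened to replace $N$ by $N_0=n^{z}$. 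Your route would also go through, but the paper's is more direct and saves you the extra concentration step for $\hat H$.
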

The proof of Theorem~\ref{thm:w*_sample} can be found in Appendix
\ref{app:proof_sampling}. The derivation is closely related to the work of
Drineas and colleagues \citet{Drineas06Fast,Drineas06Sampling}, although we
use the infinity-norm instead of the $L_2$-norm. A more important difference
is that we can exploit the factored structure, gaining an exponentially better bound. The
resulting \emph{factored value iteration} algorithm is summarized in
Table~\ref{alg:featuregeneration}.

\begin{algorithm}[h!]
\caption{Factored value iteration with a linear projection matrix
$G$.} \label{alg:featuregeneration}
\begin{algorithmic}
     \STATE \textit{\% inputs:}
     \STATE \textit{\% factored MDP, $\mathcal M = \bigl( \{X_i\}_{i=1}^m; A; \{R_j\}_{j=1}^{J}; \{\Gamma_i\}_{i=1}^m; \{P_i\}_{i=1}^m; \bx_s;
     \gamma\bigr)$}
     \STATE  \% basis functions, $\{h_k\}_{k=1}^{K}$
     \STATE  \% required accuracy, $\epsilon$

     \STATE  $N_0 := $ number of samples
     \STATE $\hat{\b X} :=$ uniform random $N_0$-element subset of $\b X$
     \STATE create $\hat H$ and $\hat G$
     \STATE create $\hat B^a = \widehat{P^a H}$ and $\hat{\b r}^a$ for each $a\in A$
     \STATE $\b w_0 = \b 0$,   $t :=0$
     \REPEAT{}
         \STATE $\displaystyle \b w_{t+1} :=\hat G \cdot \max_{a\in A} \Bigl( \hat{\b r}^a + \gamma \hat B^a \b w_t \Bigr)$
         \STATE $\Delta_t := \| \b w_{t+1} - \b w_t \|_\infty$
         \STATE $t := t+1$
     \UNTIL{$\Delta_t \geq \epsilon$}
     \RETURN $\b w_t$
\end{algorithmic}
\end{algorithm}

%
%

%
%
%

\section{Related work} \label{s:literature}

The exact solution of factored MDPs is infeasible. The idea of representing a
large MDP using a factored model was first proposed by Koller \& Parr
\cite{Koller00Policy} but similar ideas appear already in the works of
Boutilier, Dearden, \& Goldszmidt
\cite{Boutilier95Exploiting,Boutilier00Stochastic}. More recently, the
framework (and some of the algorithms) was extended to fMDPs with hybrid
continuous-discrete variables \cite{Kveton06Solving} and factored partially
observable MDPs \cite{Sallans02Reinforcement}. Furthermore, the framework has
also been applied to structured MDPs with alternative representations, e.g.,
relational MDPs \cite{Guestrin03Generalizing} and first-order MDPs
\cite{Sanner05Approximate}.

\subsection{Algorithms for solving factored MDPs}

There are two major branches of algorithms for solving fMDPs: the first one
approximates the value functions as decision trees, the other one makes use of
linear programming.


Decision trees (or equivalently, decision lists) provide a way to represent the
agent's policy compactly. Koller \& Parr \citet{Koller00Policy} and Boutilier
et al. \citet{Boutilier95Exploiting,Boutilier00Stochastic} present algorithms
to evaluate and improve such policies, according to the policy iteration
scheme. Unfortunately, the size of the policies may grow exponentially even
with a decision tree representation
\citep{Boutilier00Stochastic,Liberatore02Size}.


The exact Bellman equations (\ref{e:V*_bellman}) can be transformed to an
equivalent linear program with $N$ variables $\{V(\bx) : \bx \in \b X \}$ and
$N\cdot |A|$ constraints:
\begin{eqnarray*}
  \textrm{maximize: } && \sum_{\bx \in \b X} \alpha(\bx) V(\bx) \\
  \textrm{subject to } && V(\bx) \leq R(\bx,a) +  \gamma \sum_{\bx' \in\b X}
    P(\bx' \mid \bx,a) V(\bx'), \quad\textrm{$(\forall \bx\in\b X, a\in A)$.}
\end{eqnarray*}
Here, weights $\alpha(\bx)$ are free parameters and can be chosen freely in the
following sense: the optimum solution is $V^*$ independent of their choice,
provided that each of them is greater than 0. In the approximate linear
programming approach, we approximate the value function as a linear combination
of basis functions (\ref{e:vhat}), resulting in an approximate LP with $K$
variables $\{w_k : 1\leq k \leq K\}$ and $N\cdot |A|$ constraints:
\begin{eqnarray}
  \textrm{maximize: } && \sum_{k=1}^{K} \sum_{\bx \in \b X} w_k \cdot \alpha(\bx) h_k(\bx[C_k])  \label{e:alp}\\
  \textrm{subject to } && \sum_{k=1}^{K} w_k \cdot h_k(\bx[C_k]) \leq \nonumber \\ &&  \hspace{15mm} \leq R(\bx,a) +
  \gamma \sum_{k'=1}^{K} w_{k'} \sum_{\bx' \in\b X}
    P(\bx' \mid \bx,a) \cdot h_{k'}(\bx'[C_{k'}]). \nonumber
\end{eqnarray}
Both the objective function and the constraints can be written in compact
forms, exploiting the local-scope property of the appearing functions.

Markov decision processes were first formulated as LP tasks by Schweitzer and
Seidmann \citet{Schweitzer85Generalized}. The approximate LP form is due to de
Farias and van Roy \citet{Farias01Approximate}. Guestrin et al.
\citet{Guestrin02Efficient} show that the maximum of local-scope functions can
be computed by rephrasing the task as a non-serial dynamic programming task and
eliminating variables one by one. Therefore, (\ref{e:alp}) can be transformed
to an equivalent, more compact linear program. The gain may be exponential, but
this is not necessarily so in all cases: according to Guestrin et al.
\cite{Guestrin02Efficient}, ``as shown by Dechter \citet{Dechter99Bucket}, [the
cost of the transformation] is exponential in the induced width of the cost
network, the undirected graph defined over the variables $X_1; \ldots; X_n$,
with an edge between $X_l$ and $X_m$ if they appear together in one of the
original functions $f_j$. The complexity of this algorithm is, of course,
dependent on the variable elimination order and the problem structure.
Computing the optimal elimination order is an NP-hard problem
\citep{Arnborg87Complexity} and elimination orders yielding low induced tree
width do not exist for some problems.'' Furthermore, for the approximate LP
task (\ref{e:alp}), the solution is no longer independent of $\alpha$ and the
optimal choice of the $\alpha$ values is not known.

The approximate LP-based solution algorithm is also due to Guestrin et al.
\citet{Guestrin02Efficient}. Dolgov and Durfee \citet{Dolgov06Symmetric} apply
a primal-dual approximation technique to the linear program, and report
improved results on several problems.


The approximate policy iteration algorithm
\citep{Koller00Policy,Guestrin02Efficient} also uses an approximate LP
reformulation, but it is based on the policy-evaluation Bellman equation
(\ref{e:Vpi_bellman}). Policy-evaluation equations are, however, linear and do
not contain the maximum operator, so there is no need for the second, costly
transformation step. On the other hand, the algorithm needs an explicit
decision tree representation of the policy. Liberatore \cite{Liberatore02Size}
has shown that the size of the decision tree representation can grow
exponentially.

\subsubsection{Applications}

Applications of fMDP algorithms are mostly restricted to artificial test
problems like the problem set of Boutilier et al. \cite{Boutilier00Stochastic},
various versions of the \textsc{SysAdmin} task
\citep{Guestrin02Efficient,Dolgov06Symmetric,Patrascu02Greedy} or the New York
driving task \citep{Sallans02Reinforcement}.

Guestrin, Koller, Gearhart and Kanodia \citet{Guestrin03Generalizing} show that
their LP-based solution algorithm is also capable of solving more practical
tasks: they consider the real-time strategy game \emph{FreeCraft}. Several
scenarios are modelled as fMDPs, and solved successfully. Furthermore, they
find that the solution generalizes to larger tasks with similar structure.

\subsubsection{Unknown environment}

The algorithms discussed so far (including our FVI algorithm) assume that all
parameters of the fMDP are known, and the basis functions are given. In the
case when only the factorization structure of the fMDP is known but the actual
transition probabilities and rewards are not, one can apply the factored
versions of E$^3$ \citep{Kearns99Efficient} or R-max
\citep{Guestrin02Algorithm-Directed}.

Few attempts exist that try to obtain basis functions or the structure of the
fMDP automatically. Patrascu et al. \citet{Patrascu02Greedy} select basis
functions greedily so that the approximated Bellman error of the solution is
minimized. Poupart et al. \citet{Poupart02Piecewise} apply greedy selection,
too, but their selection criteria are different: a decision tree is constructed
to partition the state space into several regions, and basis functions are
added for each region. The approximate value function is piecewise linear in
each region. The metric they use for splitting is related to the quality of the
LP solution.

\subsection{Sampling}

Sampling techniques are widely used when the state space is immensely large.
Lagoudakis and Parr \citet{Lagoudakis03Least-Squares} use sampling without a
theoretical analysis of performance, but the validity of the approach is
verified empirically. De Farias and van Roy \citet{Farias04Constraint} give a
thorough overview on constraint sampling techniques used for the linear
programming formulation. These techniques are, however, specific to linear
programming and cannot be applied in our case.

The work most similar to ours is that of Drineas et al.
\citet{Drineas06Sampling,Drineas06Fast}. They investigate the least-squares
solution of an overdetermined linear system, and they prove that it is
sufficient to keep polynomially many samples to reach low error with high
probability. They introduce a non-uniform sampling distribution, so that the
variance of the approximation error is minimized. However, the calculation of
the probabilities requires a complete sweep through all equations.


\section{Conclusions} \label{s:conc}

In this paper we have proposed a new algorithm, factored value
iteration, for the approximate solution of factored Markov
decision processes. The classical approximate value iteration
algorithm is modified in two ways. Firstly, the least-squares
projection operator is substituted with an operator that does not
increase max-norm, and thus preserves convergence. Secondly,
polynomially many samples are sampled uniformly from the
(exponentially large) state space. This way, the complexity of our
algorithm becomes polynomial in the size of the fMDP description
length. We prove that the algorithm is convergent and give a bound
on the difference between our solution and the optimal one. We
also analyzed various projection operators with respect to their
computation complexity and their convergence when combined with
approximate value iteration. To our knowledge, this is the first
algorithm that (1) provably converges in polynomial time and (2)
avoids linear programming.

\section*{Acknowledgements}

The authors are grateful to Zolt{\'a}n Szab{\'o} for calling our attention to the
articles of Drineas et al. \citet{Drineas06Fast,Drineas06Sampling}. This
research has been supported by the EC FET `New and Emergent World models
Through Individual, Evolutionary, and Social Learning' Grant (Reference Number
3752). Opinions and errors in this manuscript are the author's responsibility,
they do not necessarily reflect those of the EC or other project members.

\appendix

\section{Proofs}

\subsection{Projections in various norms}
\label{app:proof_GL1}

We wish to know whether $  \b w_0 = \arg\min_{\b w}  \| H\b w - \b
v \|_p $  implies $\maxnorm{H\b w_0} \leq \maxnorm{\b v}$ for
various values of $p$. Specifically, we are interested in the
cases when $p \in \{1,2,\infty\}$. Fig.~\ref{fig:projection}
indicates that the implication does not hold for $p=2$ or
$p=\infty$, only for the case $p=1$. Below we give a rigorous
proof for these claims.

\begin{figure}
\centering
\includegraphics[width=16cm]{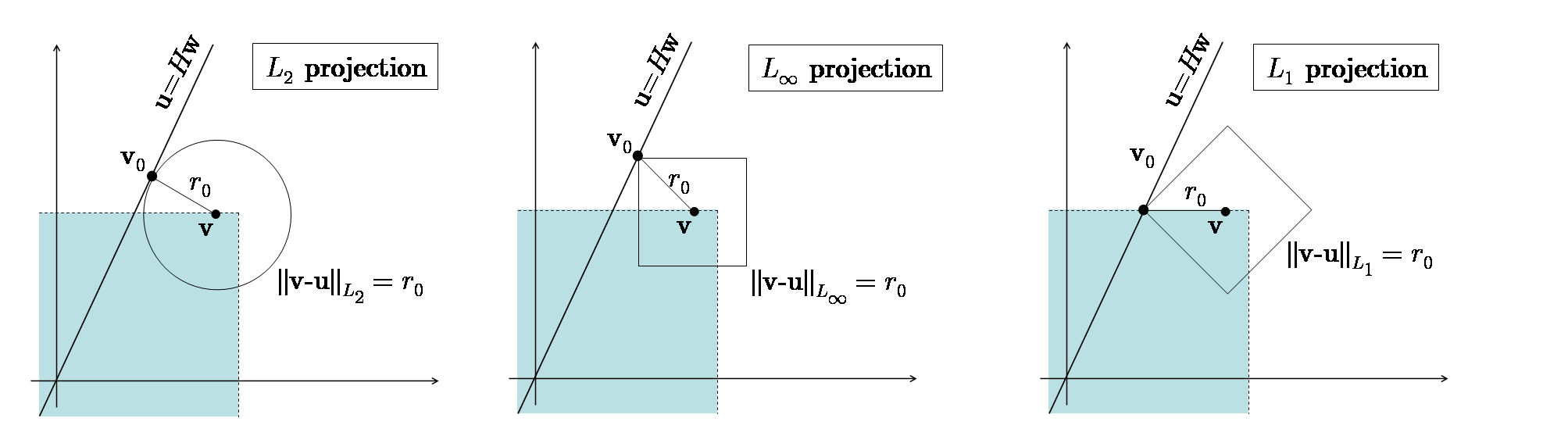}
 \caption{Projections in various norms. The vector $\b v$ is projected onto the
image space of $H$, i.e., the subspace defined by $\b u = H\b w$. Consider the
smallest sphere around $\b v$ (in the corresponding norm) that touches the
subspace $\b u = H\b w$ (shown in each figure). The radius $r_0$ of this sphere
is the distance of $\b v$ from the subspace, and the tangent point $\b v_0$
(which is not necessarily unique for $L_1$ projection) is the projection of $\b
v$. For this point, $\b v_0 = H \b w_0$ holds. The shaded region indicates the
region $\{ \b u : \maxnorm{\b u} \leq \maxnorm{\b v} \}$. To ensure the
convergence of FVI, the projected vector $\b v_0$ must fall into the shaded
region.  }\label{fig:projection}
\end{figure}

Consider the example $\b v = [1, 1]^T \in \Real^2$, $H=[1, 2]^T$,
$\b w\in \Real^1$. For these values easy calculation shows that $
  \maxnorm{H[\b w_0]_{L_2}} = 6/5 $
and $  \maxnorm{H[\b w_0]_{L_\infty}} = 4/3,$ i.e., $\maxnorm{H\b
w_0} \nleq \maxnorm{\b v}$ for both cases. For $p=1$, we shall
prove the following lemma:
\begin{lemma}
If $\b w_0 = \arg\min_{\b w}  \| H\b w - \b v \|_1$, then $\maxnorm{H\b w_0}
\leq \maxnorm{\b v}$.
\end{lemma}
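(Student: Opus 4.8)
I would argue by contradiction, using ``clipping'' (truncation into the max-norm box) as the key device. Write $M:=\maxnorm{\b v}$ and $\b u_0:=H\b w_0$, and suppose toward a contradiction that $\maxnorm{\b u_0}>M$. Define $\bar{\b u}\in\Real^N$ coordinatewise by $\bar u_i:=\min\!\bigl(M,\max(-M,(\b u_0)_i)\bigr)$, i.e.\ $\bar{\b u}$ is $\b u_0$ truncated into $[-M,M]^N$. The intuition (and the content of Figure~\ref{fig:projection}) is that the $L_1$ ball touching the subspace $\{H\b w\}$ cannot poke outside this box; clipping is the analytic version of that picture.

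\textbf{Step 1 (the easy half).} Show $\|\bar{\b u}-\b v\|_1<\|\b u_0-\b v\|_1$. This is purely coordinatewise: since $|v_i|\le M$, truncating $(\b u_0)_i$ into $[-M,M]$ can only move it toward $v_i$, so $|\bar u_i-v_i|\le|(\b u_0)_i-v_i|$ for every $i$; moreover for any index attaining $|(\b u_0)_i|=\maxnorm{\b u_0}>M$ the inequality is strict (if $(\b u_0)_i=c>M\ge v_i$ then $|\bar u_i-v_i|=M-v_i<c-v_i=|(\b u_0)_i-v_i|$, and symmetrically for $(\b u_0)_i<-M$), and there is at least one such index. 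Summing over $i$ gives the strict inequality.

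\textbf{Step 2 (the crux).} Turn this into a contradiction with the optimality of $\b w_0$. This would be immediate if $\bar{\b u}$ were of the form $H\b w$, but in general it is not, so one must instead produce a descent direction \emph{inside} the subspace $\{H\b w\}$. The route I would take: among all $L_1$-optimal weight vectors (a compact convex set) pick the representative $\b w_0$ that in addition minimizes $\maxnorm{H\b w}$ --- this is the operator one actually wants --- and then show that if $\maxnorm{H\b w_0}>M$ there is a direction $\b d\in\{H\b w\}$ along which the $L_1$ residual does not increase while $\maxnorm{H\b w}$ strictly decreases, contradicting the choice of representative. Concretely this comes from a theorem-of-the-alternative / normal-cone argument: compare the tangent cone of the $L_1$-optimal face at $\b u_0$ with a subgradient of $\maxnorm{\cdot}$ at $\b u_0$ (supported on the coordinates where $|(\b u_0)_i|=\maxnorm{\b u_0}>M$, hence where the residual already points ``outward''), and use the strict improvement of Step~1 to rule out the case where no such $\b d$ exists.

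\textbf{Where the work is.} I expect Step 2 to be the real obstacle, and it genuinely cannot be skipped: for a carelessly chosen minimizer the conclusion fails --- e.g.\ for the $3\times2$ matrix $H$ with rows $(1,0)$, $(0,1)$, $(1,1)$ and $\b v=[1,1,-1]^T$, the point $\b w_0=[1,1]^T$ minimizes $\|H\b w-\b v\|_1$ yet $\maxnorm{H\b w_0}=2>1=\maxnorm{\b v}$. So the statement has to be read for the norm-minimal (or otherwise canonically selected) element of $\arg\min$, and the proof must use that selection; Step 1 and the geometric picture are only the intuitive half.
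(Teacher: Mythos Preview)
Your diagnosis is right on the essential points: the statement is only true for a canonically selected minimizer (your $3\times 2$ counterexample is valid and the paper silently makes the same move, tie-breaking by minimal $L_2$-norm of $H\b w$), and Step~2 is where the proof lives. But your Step~2 is only a sketch, and the clipping of Step~1 does not obviously feed into it: $\bar{\b u}-\b u_0$ is a strict $L_1$-descent direction, but it is \emph{not} in the image of $H$, and you never explain how the strict improvement outside the subspace forces the existence of a non-increasing-$L_1$/decreasing-$\maxnorm{\cdot}$ direction \emph{inside} the subspace. A normal-cone/alternative argument of the kind you gesture at may be workable, but as written it is a promissory note rather than a proof.

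The paper closes the gap with a much more concrete device that bypasses clipping entirely: it takes as descent direction $-\b u_0$ itself, i.e.\ replaces $\b u_0$ by $(1-\epsilon)\b u_0$. This is automatically in $\mathrm{im}\,H$ (linear subspace), and it strictly decreases $\|\cdot\|_2$, which is the paper's tie-breaker. The only thing left to check is that $\|(1-\epsilon)\b u_0-\b v\|_1\le\|\b u_0-\b v\|_1$ for small $\epsilon$, and here the paper uses a neat cross-polytope argument: if $z_1:=(\b u_0)_1$ is the coordinate of largest absolute value and $z_1>\maxnorm{\b v}\ge v_1$, then moving from $\b u_0$ by $-\delta\b e_1$ lands strictly inside the $L_1$-ball $S(\b v,r)$, and the small cross polytope $S(\b u_0-\delta\b e_1,\delta)$ has $\b u_0$ as a vertex whose edge-cone contains the ray toward the origin (because $|z_i|\le z_1$ for all $i$). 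Hence $(1-\epsilon)\b u_0$ lies in that small polytope, hence in $S(\b v,r)$. So the paper's proof trades your convex-analysis machinery for an explicit direction plus elementary $L_1$-ball geometry; your route might be made to work, but the scaling-toward-the-origin idea is the missing concrete step you should supply.
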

\begin{proof}
Let $\b z := H \b w_0 \in \Real^N$. If there are multiple solutions to the
minimization task, then consider the (unique) $\b z$ vector with minimum
$L_2$-norm. Let $r:= \| \b z-\b v\|_1$ and let $S(\b v, r)$ be the $L_1$-sphere
with center $\b v$ and radius $r$ (this is an $N$-dimensional \emph{cross
polytope} or \emph{orthoplex}, a generalization of the octahedron).

Suppose indirectly that $\maxnorm{\b z} > \maxnorm{\b v}$. Without
loss of generality we may assume that $z_1$ is the coordinate of
$\b z$ with the largest absolute value, and that it is positive.
Therefore, $z_1 > \maxnorm{\b v}$. Let $\b e_i$ denote the
$i^{\scriptsize \textrm{th}}$ coordinate vector ($1\leq i \leq
N$), and let $\b z_\delta = \b z - \delta \b e_1$. For small
enough $\delta$, $S(\b z_\delta,\delta)$ is a cross polytope such
that (a) $S(\b z_\delta,\delta) \subset S(\b v, r)$, (b) $\forall
\b z' \in S(\b z_\delta,\delta)$, $\maxnorm{\b z'} > \maxnorm{\b
v}$, (c) $\forall \epsilon>0$ sufficiently small, $(1-\epsilon) \b
z \in S(\b z_\delta,\delta)$. The first two statements are
trivial. For the third statement, note that $\b z$ is a vertex of
the cross polytope $S(\b z_\delta,\delta)$. Consider the cone
whose vertex is $\b z$ and its edges are the same as the edges of
$S(\b z_\delta,\delta)$ joining $\b z$. It is easy to see that the
vector pointing from $\b z$ to the origo is contained in this
cone: for each $1<i\leq N$, $|z_i| \leq z_1$ (as $z_1$ is the
largest coordinate). Consequently, for small enough $\epsilon$,
$\b z -\epsilon \b z \in S(\b z_\delta,\delta)$.

Fix such an $\epsilon$ and let $\b q = (1-\epsilon)\b z$. This
vector is (a) contained in the image space of $H$ because
$H[(1-\epsilon) \b w] = \b q$; (b) $ \| \b q-\b v\|_1 \leq \| \b
z-\b v\|_1 = r$. The vector $\b z$ was chosen so that it has the
smallest $L_1$-norm in the image space of $H$, so the inequality
cannot be sharp, i.e., $\| \b q-\b v\|_1 = r$. However, $\|\b
q\|_2 = (1-\epsilon) \|\b z \|_2 < \|\b z \|_2$ with strict
inequality, which contradicts our assumption about $\b z$, thus
completing the proof. \end{proof}

\subsection{Probabilistic interpretation of $N(H^T)$} \label{app:prob_interpretation}
\begin{definition}
The basis functions $\{h_k\}_{k=1}^{n_b}$ have the \emph{uniform covering (UC)}
property, if all row sums in the corresponding $H$ matrix are identical:
\[
  \sum_{k=1}^{n_b} H_{\bx, k} = B \qquad\textrm{for all $\bx\in \b X$},
\]
and all entries are nonnegative. Without loss of generality we may assume that
all rows sum up to 1, i.e., $H$ is a stochastic matrix.
\end{definition}

We shall introduce an auxiliary MDP $\overline{\mathcal M}$ such that exact
value iteration in $\overline{\mathcal M}$ is identical to the approximate
value iteration in the original MDP $\mathcal M$. Let $\b S$ be an $K$-element
state space with states $\b s_1, \ldots, \b s_{K}$. A state $\b s$ is
considered a discrete observation of the true state of the system, $\b x\in \b
X$.

The action space $A$ and the discount factor $\gamma$ are identical to the
corresponding items of $\mathcal M$, and an arbitrary element $\b s_0\in \b S$
is selected as initial state. In order to obtain the transition probabilities,
let us consider how one can get from observing $\b s$ to observing $\b s'$ in
the next time step: from observation $\b s$, we can infer the hidden state $\b
x$ of the system; in state $\b x$, the agent makes action $a$ and transfers to
state $\b x'$ according to the original MDP; after that, we can infer the
probability that our observation will be $\b s'$, given the hidden state $\b
x'$. Consequently, the transition probability $\overline{P}(\b s'\mid \b s,a)$
can be defined as the total probability of all such paths:
\[
  \overline{P}(\b s'\mid \b s,a) := \sum_{\bx, \bx' \in \b X} \Pr(\bx \mid \b s)
   \Pr(\bx' \mid \bx, a) \Pr(\b s' \mid \bx).
\]
Here the middle term is just the transition probability in the original MDP,
the rightmost term is $H_{\bx,\b s}$, and the leftmost term can be rewritten
using Bayes' law (assuming a uniform prior on $\bx$):
\begin{eqnarray*}
  \Pr(\bx \mid \b s) &=& \frac{\Pr(\b s \mid \bx) \Pr(\bx)}{\sum_{\bx''\in\b X} \Pr(\b s \mid \bx'')
      \Pr(\bx'')}
  = \frac{H_{\bx,\b s}\cdot \frac{1}{|\b X|}}{\sum_{\bx''\in\b X} H_{\bx'',\b s}
      \cdot \frac{1}{|\b X|}} = \frac{H_{\bx,\b s}}{\sum_{\bx''\in\b X} H_{\bx'',\b s}}.
\end{eqnarray*}
Consequently,
\[
  \overline{P}(\b s'\mid \b s,a) = \sum_{\bx, \bx' \in \b X} \frac{H_{\bx,\b s}}{\sum_{\bx''\in\b X} H_{\bx'',\b s}}
   P(\bx' \mid \bx, a) H_{\bx,\b s} = \bigl[ \mathcal N(H)^T P^a H \bigr]_{\b s, \b s'}.
\]
The rewards can be defined similarly:
\[
  \overline{R}(\b s, a) := \sum_{\bx\in \b X} \Pr(\bx \mid \b s) R(\bx, a) = \bigl[ \mathcal N(H)^T \b r^a \bigr]_{\b s}.
\]
It is easy to see that approximate value iteration in $\mathcal M$ corresponds
to exact value iteration in the auxiliary MDP $\overline{\mathcal M}$.

\subsection{The proof of the sampling theorem (theorem \ref{thm:w*_sample})} \label{app:proof_sampling}

First we prove a useful lemma about approximating the product of two large
matrices. Let $A \in \Real^{m\times n}$ and $B\in\Real^{n\times k}$ and let
$C=A\cdot B$. Suppose that we sample columns of $A$ uniformly at random (with
repetition), and we also select the corresponding rows of $B$. Denote the
resulting matrices with $\hat A$ and $\hat B$. We will show that $A\cdot B
\approx c \cdot \hat A \cdot \hat B$, where $c$ is a constant scaling factor
compensating for the dimension decrease of the sampled matrices. The following
lemma is similar to Lemma 11 of \citet{Drineas06Fast}, but here we estimate the
infinity-norm instead of the $L_2$-norm.

\begin{lemma} \label{lem:sampling}
Let $A \in \Real^{m\times N}$, $B\in\Real^{N\times k}$ and $C=A\cdot B$. Let
$N'$ be an integer so that $1\leq N' \leq N$, and for each
$i\in\{1,\ldots,N'\}$, let $r_i$ be an uniformly random integer from the
interval $[1,N]$. Let $\hat A \in \Real^{m \times N'}$ be the matrix whose
$i^{\scriptsize \textrm{th}}$ column is the $r_i{}^{\scriptsize \textrm{th}}$
column of $A$, and denote by $\hat B$ the $N'\times k$ matrix that is obtained
by sampling the rows of $B$ similarly. Furthermore, let
\[
  \hat C = \frac{N}{N'} \hat A \cdot \hat B = \frac{N}{N'} \sum_{i=1}^{N'} A_{*,r_i} B_{r_i,*}.
\]
Then, for any $\epsilon,\delta>0$, $\maxnorm{\hat
C-C} \leq \epsilon N \maxnorm{A} \maxnorm{B^T}$ with probability at least
$1-\delta$, if the sample size satisfies $N' \geq
\frac{2m^2}{\epsilon^2} \log \frac{2km}{\delta}$.
\end{lemma}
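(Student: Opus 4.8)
The plan is to bound each entry of $\hat C - C$ separately and then take a union bound over the $mk$ entries. Fix a row index $p \in \{1,\dots,m\}$ and a column index $q \in \{1,\dots,k\}$. The $(p,q)$ entry of $C$ is $C_{p,q} = \sum_{j=1}^N A_{p,j} B_{j,q}$, and the corresponding entry of $\hat C$ is $\hat C_{p,q} = \frac{N}{N'} \sum_{i=1}^{N'} A_{p,r_i} B_{r_i,q}$. The key observation is that $\hat C_{p,q}$ is an average (rescaled by $N$) of $N'$ i.i.d. random variables: for each sampled index $i$, set $Y_i := A_{p,r_i} B_{r_i,q}$, where $r_i$ is uniform on $\{1,\dots,N\}$. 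Then $E[Y_i] = \frac{1}{N}\sum_{j=1}^N A_{p,j} B_{j,q} = \frac{1}{N} C_{p,q}$, so $\hat C_{p,q} = \frac{N}{N'}\sum_{i=1}^{N'} Y_i$ is an unbiased estimator of $C_{p,q}$.

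First I would establish the boundedness of the summands needed for a Hoeffding-type concentration inequality. Each $Y_i$ satisfies $|Y_i| = |A_{p,r_i}|\,|B_{r_i,q}| \leq \maxnorm{A}\,\maxnorm{B^T}$, since $\maxnorm{A}$ is the maximum absolute row sum of $A$ (hence bounds every entry) and $\maxnorm{B^T}$ is the maximum absolute row sum of $B^T$, i.e. the maximum absolute column sum of $B$ (hence bounds every entry of $B$). Writing $M := \maxnorm{A}\,\maxnorm{B^T}$, each $Y_i$ lies in an interval of length at most $2M$. Applying Hoeffding's inequality to the sum $\sum_{i=1}^{N'} Y_i$ gives
\[
  \Pr\!\left( \left| \sum_{i=1}^{N'} Y_i - \frac{N'}{N} C_{p,q} \right| \geq t \right) \leq 2\exp\!\left( -\frac{t^2}{2 N' M^2} \right).
\]
Multiplying through by $N/N'$, the event $|\hat C_{p,q} - C_{p,q}| \geq \epsilon N M$ corresponds to $t = \epsilon N' M$, yielding a failure probability at most $2\exp\!\left( -\epsilon^2 N' / 2 \right)$ for this single entry.

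Next I would take a union bound over all $mk$ choices of $(p,q)$: the probability that $\maxnorm{\hat C - C} = \max_{p,q} |\hat C_{p,q} - C_{p,q}| \geq \epsilon N M$ is at most $2mk \exp(-\epsilon^2 N'/2)$. Requiring this to be at most $\delta$ and solving for $N'$ gives $N' \geq \frac{2}{\epsilon^2}\log\frac{2mk}{\delta}$. Since the paper's stated bound is $N' \geq \frac{2m^2}{\epsilon^2}\log\frac{2km}{\delta}$, which is larger (by the factor $m^2$ in front, harmless for an upper bound on the required sample size), the conclusion follows a fortiori; the extra $m^2$ presumably leaves room for the subsequent application in Theorem~\ref{thm:w*_sample} where matrix products are chained. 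The main obstacle I anticipate is not the concentration argument itself — which is a routine Hoeffding plus union bound — but rather being careful about the matrix-norm bookkeeping: verifying that $\maxnorm{A}$ and $\maxnorm{B^T}$ (row-sum norms, as used elsewhere in the paper) really do bound the individual entries $|A_{p,j}|$ and $|B_{j,q}|$, and that no dimension-dependent factor sneaks in when passing from the per-entry bound to the max-norm of the matrix difference. One should also note the summands $Y_i$ are genuinely i.i.d. because sampling is done with repetition, which is exactly the hypothesis; this keeps the Hoeffding application clean.
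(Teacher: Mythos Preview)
Your concentration step is fine, but the final norm bookkeeping is where the argument breaks. You write $\maxnorm{\hat C - C} = \max_{p,q} |\hat C_{p,q} - C_{p,q}|$, yet earlier in the same paragraph you (correctly) identify $\maxnorm{\cdot}$ as the induced infinity norm, i.e.\ the maximum absolute row sum. For the $m\times k$ matrix $\hat C - C$ this is $\max_{p}\sum_{q}|\hat C_{pq}-C_{pq}|$, not the entrywise maximum. Bounding each entry by $\epsilon N M$ only gives $\maxnorm{\hat C - C}\leq k\,\epsilon N M$, which is off by a dimension factor.

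This is precisely why the $m^2$ in the sample-size bound is not slack you can ``leave for later'': the paper's proof first bounds each entry by $N\epsilon_1$ via Hoeffding, then \emph{sums over one index} (picking up a factor $m$) to control the row/column sums, and only then sets $\epsilon_1 = \epsilon\maxnorm{A}\maxnorm{B}/m$, which is what forces $N'\geq \frac{2m^2}{\epsilon^2}\log\frac{2km}{\delta}$. Your a fortiori claim is therefore backwards: the bound $N'\geq \frac{2}{\epsilon^2}\log\frac{2mk}{\delta}$ you derive is \emph{not} sufficient for the stated conclusion about the induced norm. The fix is easy---insert the summation step and rescale $\epsilon$---but as written the proposal does not close.
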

\begin{proof}
We begin by bounding individual elements of the matrix $\hat C$: consider the
element
\[
  \hat C_{pq} = \frac{N}{N'} \sum_{i=1}^{N'} A_{p,r_i} B_{r_i,q}.
\]
Let $\mathcal C_{pq}$ be the discrete probability distribution determined by
mass points $\{ A_{p,i}\cdot B_{i,q} \mid 1\leq i \leq N \}$.
Note that $\hat C_{pq}$ is essentially the sum of $N'$ random variables drawn
uniformly from distribution $\mathcal C_{pq}$. Clearly,
\begin{eqnarray*}
  |A_{pi} B_{iq}| &\leq& \max_{ij} |A_{ij}| \max_{ij} |B_{ij}|
    \leq \max_{i}\sum_j |A_{ij}| \max_{i}\sum_j |B_{ij}| = \maxnorm{A} \maxnorm{B},
\end{eqnarray*}
so we can apply Hoeffding's
inequality to obtain
\[
  \Pr\Bigl( \left|\frac{\sum_{i=1}^{N'} A_{p,r_i} B_{r_i,q} }{N'} -
  \frac{\sum_{j=1}^{N} A_{p,j} B_{j,q} }{N} \right| >\epsilon_1 \Bigr)
  < 2 \exp \Bigl( - \frac{N' \epsilon_1^2}{2\maxnorm{A}^2 \maxnorm{B}^2} \Bigr),
\]
or equivalently,
\[
  \Pr\Bigl( \left|\frac{N }{N'}[\hat A \hat B]_{pq} -
   [AB]_{pq} \right| > N\epsilon_1 \Bigr)
  < 2 \exp \Bigl( - \frac{N' \epsilon_1^2}{2\maxnorm{A}^2 \maxnorm{B}^2} \Bigr),
\]
where $\epsilon_1 > 0$ is a constant to be determined later. From this, we can
bound the row sums of $\hat C - C$:
\[
  \Pr\Bigl( \sum_{p=1}^m \left|\hat C_{pq} -
  C_{pq} \right| > m\cdot N\epsilon_1 \Bigr)
  < 2 m\exp \Bigl( - \frac{N' \epsilon_1^2}{2\maxnorm{A}^2 \maxnorm{B}^2} \Bigr),
\]
which gives a bound on $\maxnorm{\hat C - C}$. This is the maximum of these row
sums:
\[
\Pr\Bigl( \maxnorm{\hat C - C} > m N\epsilon_1 \Bigr) =
  \Pr\Bigl( \max_q \sum_{p=1}^m \left|\hat C_{pq} -
  C_{pq} \right| > mN\epsilon_1 \Bigr)
  < 2 km \exp \Bigl( - \frac{N' \epsilon_1^2}{2\maxnorm{A}^2 \maxnorm{B}^2} \Bigr).
\]

Therefore, by substituting $\epsilon_1 = \epsilon \maxnorm{A} \maxnorm{B}/
m$, the statement of the lemma is satisfied if
 $ 2 km \exp \Bigl( - \frac{N' \epsilon^2}{2m^2} \Bigr) \leq
 \delta, $
i.e, if $  N' \geq \frac{2m^2}{\epsilon^2} \log
\frac{2km}{\delta}.$
\end{proof}

If both $A$ and $B$ are structured, we can sharpen the lemma to give a much better
(potentially exponentially better) bound. For this, we need the following definition:

For any index set $Z$, a matrix $A$ is called $Z$-local-scope matrix, if each column of $A$
represents a local-scope function with scope $Z$.

\begin{lemma}
Let $A^T$ and $B$ be local-scope matrices with scopes $Z_1$ and $Z_2$, and let $N_0 = n^{|Z_1|+|Z_2|}$,
and apply the random row/column selection procedure of the previous lemma. Then, for any $\epsilon,\delta>0$, $\maxnorm{\hat
C-C} \leq \epsilon N_0 \maxnorm{A} \maxnorm{B}$ with probability at least
$1-\delta$, if the sample size satisfies $N' \geq
\frac{2m^2}{\epsilon^2} \log \frac{2km}{\delta}$.
\end{lemma}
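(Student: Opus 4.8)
The plan is to reduce this refined lemma to the previous one (Lemma~\ref{lem:sampling}) by observing that when $A^T$ and $B$ are local-scope matrices, the product $C = AB$ has a huge amount of redundancy: each column of $A$ depends only on the coordinates in $Z_1$, and each row of $B$ only on those in $Z_2$, so the columns of $A$ take at most $n^{|Z_1|}$ distinct values and the rows of $B$ at most $n^{|Z_2|}$ distinct values. First I would make this precise by partitioning the index set $\{1,\ldots,N\}$ according to the value of $\bx[Z_1 \cup Z_2]$; there are exactly $N_0 = n^{|Z_1| + |Z_2|}$ such classes (padding $Z_1 \cup Z_2$ to this size if $Z_1$ and $Z_2$ overlap), and within each class the rank-one term $A_{*,i} B_{i,*}$ is constant. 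Summing over a class of size $N/N_0$ collapses $AB$ to a sum of $N_0$ weighted rank-one terms, each weight being $N/N_0$.

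Next I would reinterpret the uniform sampling of columns of $A$ (and matching rows of $B$) as uniform sampling over these $N_0$ equivalence classes: drawing $r_i$ uniformly from $[1,N]$ and recording $A_{*,r_i} B_{r_i,*}$ is, after the collapse, exactly drawing a class uniformly from the $N_0$ classes. Hence $\hat C = \frac{N}{N'}\sum_i A_{*,r_i} B_{r_i,*}$ is (up to the common scaling $N/N_0$) an empirical average of $N'$ i.i.d.\ draws whose mean is $\frac{1}{N_0}\,C$ — the same structure as in Lemma~\ref{lem:sampling}, but with $N$ replaced throughout by $N_0$. Applying the Hoeffding bound exactly as in the proof of Lemma~\ref{lem:sampling}, with the elementwise bound $|A_{pi}B_{iq}| \le \maxnorm{A}\maxnorm{B}$ unchanged, one obtains $\maxnorm{\hat C - C} \le \epsilon N_0 \maxnorm{A}\maxnorm{B}$ with probability $\ge 1-\delta$ under the same sample-size condition $N' \ge \frac{2m^2}{\epsilon^2}\log\frac{2km}{\delta}$, since that condition never involved $N$ in the first place.

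The one subtlety — and the part I expect to require the most care — is the bookkeeping when $Z_1$ and $Z_2$ overlap, and making sure the "collapse" is genuinely an equality of distributions rather than just of means: I want each equivalence class to have exactly $N/N_0$ members so that uniform sampling on $[1,N]$ pushes forward to the uniform distribution on classes, which is what lets me quote the earlier proof verbatim. Because all the $X_i$ have the same size $n$ (as assumed in the fMDP setup), every value of $\bx[Z_1\cup Z_2]$ is attained by the same number $n^{m-|Z_1\cup Z_2|}$ of full states, and $|Z_1\cup Z_2|\le |Z_1|+|Z_2|$, so taking $N_0 = n^{|Z_1|+|Z_2|}$ as the (possibly loose) bound on the number of classes is safe; I would note that the bound still holds, a fortiori, if fewer classes are actually present. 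Everything else is a line-by-line repetition of the previous lemma's argument with the symbol $N$ replaced by $N_0$, so I would simply say "the proof is identical to that of Lemma~\ref{lem:sampling}, mutatis mutandis" after establishing the collapse.
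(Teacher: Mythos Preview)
Your reduction to equivalence classes indexed by $\bx[Z_1\cup Z_2]$ is exactly the paper's idea, and your remark about overlap is in fact more careful than the paper, which silently writes $N_1=N/N_0$ as if $Z_1\cap Z_2=\emptyset$. However, there is a genuine gap in the step where you say ``with the elementwise bound $|A_{pi}B_{iq}|\le\maxnorm{A}\maxnorm{B}$ unchanged, one obtains $\maxnorm{\hat C-C}\le\epsilon N_0\maxnorm{A}\maxnorm{B}$.'' That does not follow. Hoeffding only sees the range of the summands; the fact that they take at most $N_0$ distinct values gives no additional concentration. If you keep the range bound $\maxnorm{A}\maxnorm{B}$, you end up with $\maxnorm{\hat C-C}\le\epsilon N\maxnorm{A}\maxnorm{B}$ again, because $\hat C-C$ is still $N$ times the deviation of the sample mean from the true mean, regardless of how many distinct values the draws take.

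The paper's proof gets the factor $N_0$ through a cancellation you did not write down. It defines the collapsed matrices $A'\in\Real^{m\times N_0}$ and $B'\in\Real^{N_0\times k}$ (one representative per class), so that $AB=N_1\,A'B'$ and $\hat C=N_1\,\widehat{A'B'}$, and then applies Lemma~\ref{lem:sampling} to $A',B'$ with \emph{their} norms: this gives $\maxnorm{\widehat{A'B'}-A'B'}\le\epsilon N_0\maxnorm{A'}\maxnorm{B'}$. Multiplying by $N_1$ yields a factor $N_1 N_0=N$, but the key observation is that $\maxnorm{A'}=\maxnorm{A}/N_1$, since each row of $A$ consists of $N_1$ copies of each entry of the corresponding row of $A'$; meanwhile $\maxnorm{B'}=\maxnorm{B}$ because removing duplicate rows does not change the maximum row sum. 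The $N_1$ in the row-sum norm cancels the $N_1$ from the scaling, leaving $\epsilon N_0\maxnorm{A}\maxnorm{B}$. So the improvement comes not from ``replacing $N$ by $N_0$'' in Hoeffding, but from the fact that the max-norm of the collapsed $A'$ is smaller by exactly the redundancy factor. Once you insert this observation, your outline becomes the paper's proof.
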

\begin{proof}
Fix a variable assignment $\bx[Z_1\cup Z_2]$ on the domain $Z_1 \cup Z_2$ and consider the rows of
$A$ that correspond to a variable assignment compatible to $\bx[Z_1\cup Z_2]$, i.e., they are
identical to it for components $Z_1\cup Z_2$ and are arbitrary on
\[
W:= \{1,2,\ldots,m\}\setminus (Z_1\cup
Z_2).
\]
It is easy to see that all of these rows are identical because of the local-scope property. The same
holds for the columns of $B$. All the equivalence classes of rows/columns have cardinality
\[
N_1 : = n^{|W|} = N/N_0.
\]
Now let us define the $m\times N_0$ matrix $A'$ so that only one column is kept from each
equivalence class, and define the $N_0 \times k$ matrix $B'$ similarly, by omitting rows. Clearly,
\[
  A\cdot B = N_1 A' \cdot B',
\]
and we can apply the sampling lemma to the smaller matrices $A'$ and $B'$ to get
that for any $\epsilon,\delta>0$ and sample size $N' \geq \frac{2m^2}{\epsilon^2} \log \frac{2km}{\delta}$,
with probability at least $1-\delta$,
\[
\maxnorm{\widehat{A'\cdot B'}-A'\cdot B'} \leq \epsilon N_0 \maxnorm{A'} \maxnorm{B'}.
\]
Exploiting the fact that the max-norm of a matrix is the maximum of row norms, $\maxnorm{A'} =
\maxnorm{A}/N_1$ and $\maxnorm{B'} =
\maxnorm{B}$, we can multiply both sides to get
\[
\maxnorm{N_1 \widehat{A'\cdot B'}-A\cdot B} \leq \epsilon N_0 N_1 \maxnorm{A} /N_1 \maxnorm{B} = \epsilon N_0 \maxnorm{A} \maxnorm{B},
\]
which is the statement of the lemma.
\end{proof}

Note that if the scopes $Z_1$ and $Z_2$ are small, then the gain compared to the previous lemma
can be exponential.

\begin{lemma}
Let $A = A_1 + \ldots + A_p$ and $B = B_1 + \ldots + B_q$ where all $A_i$ and $B_j$ are local-scope
matrices with domain size at most $z$, and let $N_0 = n^z$. If we apply the random row/column selection procedure,
then for any $\epsilon,\delta>0$, $\maxnorm{\hat
C-C} \leq \epsilon N_0 pq \max_i \maxnorm{A_i} \max_j \maxnorm{B_j}$ with probability at least
$1-\delta$, if the sample size satisfies $N' \geq
\frac{2m^2}{\epsilon^2} \log \frac{2km}{\delta}$.
\end{lemma}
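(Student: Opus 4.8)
The plan is to reduce the statement to the preceding lemma by combining bilinearity of the matrix product with the linearity of the sampling operator. First I would expand $C = AB = \sum_{i=1}^{p}\sum_{j=1}^{q} A_iB_j$. The essential observation is that the random row/column selection procedure draws a \emph{single} multiset of indices $r_1,\dots,r_{N'}$ and applies it to every matrix simultaneously, so $\hat A = \sum_i \hat{A_i}$ and $\hat B = \sum_j \hat{B_j}$ (the same sampled columns, resp. rows, in each summand). Hence
\[
  \hat C \;=\; \frac{N}{N'}\hat A\,\hat B \;=\; \sum_{i=1}^{p}\sum_{j=1}^{q}\frac{N}{N'}\hat{A_i}\,\hat{B_j} \;=\; \sum_{i=1}^{p}\sum_{j=1}^{q}\widehat{A_iB_j},
\]
where $\widehat{A_iB_j}$ is exactly the estimator to which the preceding lemma (products of two local-scope matrices) applies. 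Subtracting, $\hat C - C = \sum_{i,j}\bigl(\widehat{A_iB_j}-A_iB_j\bigr)$, so by the triangle inequality for $\maxnorm{\cdot}$ it suffices to control each of the $pq$ summands.

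Next I would apply the preceding lemma to each pair $(A_i,B_j)$. Since $A_i^{T}$ and $B_j$ are local-scope of scope size at most $z$, the product $A_iB_j$ collapses (as in that lemma) to at most $N_0$ distinct columns, which gives $\maxnorm{\widehat{A_iB_j}-A_iB_j}\le\epsilon N_0\maxnorm{A_i}\maxnorm{B_j}$ for a sample size of the stated order. Summing over $i,j$ and bounding $\maxnorm{A_i}\le\max_i\maxnorm{A_i}$ and $\maxnorm{B_j}\le\max_j\maxnorm{B_j}$ then yields
\[
  \maxnorm{\hat C - C}\;\le\;\sum_{i,j}\epsilon N_0\maxnorm{A_i}\maxnorm{B_j}\;\le\;\epsilon N_0\,pq\,\max_i\maxnorm{A_i}\,\max_j\maxnorm{B_j},
\]
which is the desired bound. (Here one has to be a little careful about the meaning of $z$ and $N_0$: the collapse is governed by the \emph{combined} scope $|Z_1^{(i)}\cup Z_2^{(j)}|$, so for the constant to come out as written one reads $N_0$ as an upper bound for $n^{|Z_1^{(i)}\cup Z_2^{(j)}|}$ uniformly over all pairs.)

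The step I expect to need the most care is the probability bookkeeping: the display above needs all $pq$ per-pair inequalities to hold \emph{simultaneously}. The clean fix is to invoke the preceding lemma with confidence $\delta/(pq)$ for each pair and union-bound, making the total failure probability at most $\delta$; this replaces $\log\frac{2km}{\delta}$ by $\log\frac{2kmpq}{\delta}$ in the sample-size requirement, which is still polynomial and, as $p$ and $q$ are polynomial in the fMDP description length, is absorbed into the constant of Theorem~\ref{thm:w*_sample}. An alternative that leaves the sample size untouched is to re-run the per-entry Hoeffding step of Lemma~\ref{lem:sampling} directly on the double sum, taking one union bound over all $m\cdot k\cdot p\cdot q$ scalar deviations. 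Beyond this, no new probabilistic estimate is required: the only property being used is that the sampling estimator is $\tfrac{N}{N'}$ times a sum of sampled rank-one terms and is therefore additive in both of its matrix arguments, so the shared random index set introduces no coupling that would obstruct the term-by-term bound.
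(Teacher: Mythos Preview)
Your proposal is correct and follows essentially the same approach as the paper: expand $\hat C-C$ bilinearly into the $pq$ terms $\widehat{A_iB_j}-A_iB_j$, apply the preceding local-scope lemma to each, and sum via the triangle inequality, noting that the same sampled index set may be reused across all pairs. In fact you are more careful than the paper about the probability bookkeeping: the paper simply remarks that ``independence is required only \emph{within} single matrix pairs'' and sums the per-pair bounds without an explicit union bound, whereas you correctly observe that a rigorous argument should either pay a $\log(pq)$ factor in the sample size or redo the Hoeffding step once over all $m\cdot k\cdot p\cdot q$ scalar entries.
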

\begin{proof}
\[
 \maxnorm{\hat C-C} \leq \sum_{i=1}^p \sum_{j=1}^q \maxnorm{\widehat{A_i\cdot B_j}- A_i \cdot B_j}.
\]
For each individual product term we can apply the previous lemma. Note that we can use the same
row/column samples for each product, because independence is required only \emph{within} single
matrix pairs. Summing the right-hand sides gives the statement of the lemma.
\end{proof}

Now we can complete the proof of Theorem \ref{thm:w*_sample}:

\begin{proof}
\begin{eqnarray*}
  \maxnorm{\b w^* - \b w'} &=& \maxnorm{G \mathcal T H  \b w^* - \hat G \mathcal T \hat H \b w'} \\
  &\leq& \maxnorm{G \mathcal T H  \b w^* - \hat G \mathcal T \hat H \b w^*} + \maxnorm{\hat G \mathcal T \hat H  \b w^* - \hat G \mathcal T \hat H \b w'} \\
  &\leq& \maxnorm{G \mathcal T H  \b w^* - \hat G \mathcal T \hat H \b w^*} + \gamma \maxnorm{\b w^* - \b w'},
\end{eqnarray*}
i.e., $\maxnorm{\b w^* - \b w'} \leq \frac{1}{1-\gamma}\maxnorm{G
\mathcal T H \b w^* - \hat G \mathcal T \hat H \b w^*}$. Let
$\pi_0$ be the greedy policy with respect to the value function $H
\b w^*$. With its help, we can rewrite $\mathcal T H \b w^*$ as a
linear expression: $
 \mathcal T H \b w^* = \b r^{\pi_0} + \gamma P^{\pi_0} H \b w^*. $
Furthermore, $\mathcal T$ is a componentwise operator, so we can
express its effect on the downsampled value function as $
 \mathcal T \hat H \b w^* = \hat{\b r}^{\pi_0} + \gamma \widehat{P^{\pi_0} H} \b
 w^*.$
Consequently,
\begin{eqnarray*}
\maxnorm{G \mathcal T H \b w^* - \hat G \mathcal T \hat H \b w^*} \leq
  \maxnorm{G \b r^{\pi_0} - \hat G \hat{\b r}^{\pi_0}} +
  \gamma \maxnorm{G P^{\pi_0} H  - \hat G \widehat{P^{\pi_0} H} } \maxnorm{\b w^*}
\end{eqnarray*}
Applying the previous lemma two times, we get that with
probability greater than $1-\delta_1$, $
  \maxnorm{G \b r^{\pi_0} - \hat G \hat{\b r}^{\pi_0}} \leq \epsilon_1
  C_1$
if $N' \geq \frac{2m^2}{\epsilon_1^2} \log \frac{2m}{\delta_1}$
and with probability greater than $1-\delta_2$, $
  \maxnorm{G P^{\pi_0} H  - \hat G \widehat{P^{\pi_0} H} } \leq \epsilon_2
  C_2 $
if $N' \geq \frac{2m^2}{\epsilon_2^2} \log \frac{2m^2}{\delta_2}$; where $C_1$ and $C_2$ are
constants depending polynomially on $N_0$ and the norm of the component local-scope functions, but independent
of $N$.

Using the notation $M= \frac{1}{1-\gamma} \Bigl( C_1 + \gamma C_2 \maxnorm{\b w^*}
\Bigr)$, $\epsilon_1 = \epsilon_2 = \epsilon/M$, $\delta_1 =
\delta_2 = \delta/2$ and $\Xi = M^2$ proves the theorem.
\end{proof}

Informally, this theorem tells that the required number of samples grows
quadratically with the desired accuracy $1/\epsilon$ and logarithmically with
the required certainty $1/\delta$, furthermore, the dependence on the number of
variables $m$ is slightly worse than quadratic. This means that even if the
number of equations is exponentially large, i.e., $N = O(e^m)$, we can select a
polynomially large random subset of the equations so that with high
probability, the solution does not change very much.

\bibliographystyle{plain}
\bibliography{rl}

\begin{thebibliography}{10}

\bibitem{Arnborg87Complexity}
Stefan Arnborg, Derek~G. Corneil, and Andrzej Proskurowski.
\newblock Complexity of finding embeddings in a k-tree.
\newblock {\em SIAM Journal on Algebraic and Discrete Methods}, 8(2):277--284,
  1987.

\bibitem{Baird95Residual}
L.C. Baird.
\newblock Residual algorithms: Reinforcement learning with function
  approximation.
\newblock In {\em ICML}, pages 30--37, 1995.

\bibitem{Bellman61Adaptive}
Richard~E. Bellman.
\newblock {\em Adaptive Control Processes}.
\newblock Princeton University Press, Princeton, NJ., 1961.

\bibitem{Bertsekas96Neuro-Dynamic}
D.P. Bertsekas and J.N. Tsitsiklis.
\newblock {\em Neuro-Dynamic Programming}.
\newblock Athena Scientific, 1996.

\bibitem{Boutilier95Exploiting}
C.~Boutilier, R.~Dearden, and M.~Goldszmidt.
\newblock Exploiting structure in policy construction.
\newblock In {\em IJCAI}, pages 1104--1111, 1995.

\bibitem{Boutilier00Stochastic}
Craig Boutilier, Richard Dearden, and Moises Goldszmidt.
\newblock Stochastic dynamic programming with factored representations.
\newblock {\em Artificial Intelligence}, 121(1-2):49--107, 2000.

\bibitem{Farias01Approximate}
D.P. de~Farias and B.~van Roy.
\newblock Approximate dynamic programming via linear programming.
\newblock In {\em NIPS}, pages 689--695, 2001.

\bibitem{Farias04Constraint}
D.P. de~Farias and B.~van Roy.
\newblock On constraint sampling in the linear programming approach to
  approximate dynamic programming.
\newblock {\em Mathematics of Operations Research}, 29(3):462--478, 2004.

\bibitem{Dechter99Bucket}
R.~Dechter.
\newblock Bucket elimination: A unifying framework for reasoning.
\newblock {\em AIJ}, 113(1-2):41--85, 1999.

\bibitem{Dolgov06Symmetric}
Dmitri~A. Dolgov and Edmund~H. Durfee.
\newblock Symmetric primal-dual approximate linear programming for factored
  {MDP}s.
\newblock In {\em Proceedings of the 9th International Symposium on Artificial
  Intelligence and Mathematics (AI\&M 2006)}, 2006.

\bibitem{Drineas06Fast}
P.~Drineas, R.~Kannan, and M.W. Mahoney.
\newblock Fast {M}onte {C}arlo algorithms for matrices {I}: Approximating
  matrix multiplication.
\newblock {\em SIAM Journal of Computing}, 36:132--157, 2006.

\bibitem{Drineas06Sampling}
P.~Drineas, M.W. Mahoney, and S.~Muthukrishnan.
\newblock Sampling algorithms for l2 regression and applications.
\newblock In {\em SODA}, pages 1127--1136, 2006.

\bibitem{Guestrin02Efficient}
C.~Guestrin, D.~Koller, R.~Parr, and S.~Venkataraman.
\newblock Efficient solution algorithms for factored {MDP}s.
\newblock {\em JAIR}, 19:399--468, 2002.

\bibitem{Guestrin02Algorithm-Directed}
C.~Guestrin, R.~Patrascu, and D.~Schuurmans.
\newblock Algorithm-directed exploration for model-based reinforcement learning
  in factored mdps.
\newblock In {\em ICML}, pages 235--242, 2002.

\bibitem{Guestrin03Generalizing}
Carlos Guestrin, Daphne Koller, Chris Gearhart, and Neal Kanodia.
\newblock Generalizing plans to new environments in relational {MDP}s.
\newblock In {\em Eighteenth International Joint Conference on Artificial
  Intelligence}, 2003.

\bibitem{Kearns99Efficient}
M.J. Kearns and D.~Koller.
\newblock Efficient reinforcement learning in factored {MDP}s.
\newblock In {\em Proceedings of the 16th International Joint Conference on
  Artificial Intelligence}, pages 740--747, 1999.

\bibitem{Koller00Policy}
D.~Koller and R.~Parr.
\newblock Policy iteration for factored {MDP}s.
\newblock In {\em UAI}, pages 326--334, 2000.

\bibitem{Kveton06Solving}
Branislav Kveton, Milos Hauskrecht, and Carlos Guestrin.
\newblock Solving factored {MDPs} with hybrid state and action variables.
\newblock {\em Journal of Artificial Intelligence Research}, 27:153--201, 2006.

\bibitem{Lagoudakis03Least-Squares}
M.G. Lagoudakis and R.~Parr.
\newblock Least-squares policy iteration.
\newblock {\em JMLR}, 4:1107--1149, 2003.

\bibitem{Liberatore02Size}
P.~Liberatore.
\newblock The size of {MDP} factored policies.
\newblock In {\em AAAI}, pages 267--272, 2002.

\bibitem{Patrascu02Greedy}
Relu Patrascu, Pascal Poupart, Dale Schuurmans, Craig Boutilier, and Carlos
  Guestrin.
\newblock Greedy linear value-approximation for factored markov decision
  processes.
\newblock In {\em Proceedings of the 18th National Conference on Artificial
  intelligence}, pages 285--291, 2002.

\bibitem{Poupart02Piecewise}
Pascal Poupart, Craig Boutilier, Relu Patrascu, and Dale Schuurmans.
\newblock Piecewise linear value function approximation for factored mdps.
\newblock In {\em Proceedings of the 18th National Conference on AI}, 2002.

\bibitem{Sallans02Reinforcement}
Brian Sallans.
\newblock {\em Reinforcement Learning for Factored Markov Decision Processes}.
\newblock PhD thesis, University of Toronto, 2002.

\bibitem{Sanner05Approximate}
Scott Sanner and Craig Boutilier.
\newblock Approximate linear programming for first-order {MDP}s.
\newblock In {\em Proceedings of the 21th Annual Conference on Uncertainty in
  Artificial Intelligence (UAI)}, pages 509--517, 2005.

\bibitem{Schweitzer85Generalized}
P.J. Schweitzer and A.~Seidmann.
\newblock Generalized polynomial approximations in {M}arkovian decision
  processes.
\newblock {\em Journal of Mathematical Analysis and Applications},
  110(6):568--582, 1985.

\bibitem{Sutton98Reinforcement}
R.S. Sutton and A.G. Barto.
\newblock {\em Reinforcement {L}earning: An {I}ntroduction}.
\newblock MIT Press, 1998.

\bibitem{Tsitsiklis97Analysis}
John~N. Tsitsiklis and Benjamin~Van Roy.
\newblock An analysis of temporal-difference learning with function
  approximation.
\newblock {\em IEEE Transactions on Automatic Control}, 42(5):674--690, 1997.

\end{thebibliography}

\end{document}